\theoremstyle{plain}
\newtheorem{thm}{\protect\theoremname}
\theoremstyle{remark}
\newtheorem*{rem}{\protect\remarkname}
\theoremstyle{plain}
\newtheorem{prop}[thm]{\protect\propositionname}
\title{Hamiltonian Mechanics of Feature Learning:\\
Bottleneck Structure in Leaky ResNets}
\author{%
  Arthur Jacot\textsuperscript{1}, Alexandre Kaiser\textsuperscript{1} \\
  \textsuperscript{1}Courant Institute, NYU\\
  \texttt{arthur.jacot@nyu.edu,amk1004@nyu.edu}
}
\providecommand{\propositionname}{Proposition}
\providecommand{\remarkname}{Remark}
\providecommand{\theoremname}{Theorem}
\begin{document}

\maketitle
\begin{abstract}
We study Leaky ResNets, which interpolate between ResNets
and Fully-Connected nets depending on an 'effective
depth' hyper-parameter $\tilde{L}$. In the infinite depth limit,
we study 'representation geodesics' $A_{p}$: continuous paths in
representation space (similar to NeuralODEs) from input $p=0$ to
output $p=1$ that minimize the parameter norm of the network. We
give a Lagrangian and Hamiltonian reformulation, which highlight the
importance of two terms: a kinetic energy which favors small layer
derivatives $\partial_{p}A_{p}$ and a potential energy that favors
low-dimensional representations, as measured by the 'Cost of Identity'.
The balance between these two forces offers an intuitive understanding
of feature learning in ResNets. We leverage this intuition to explain
the emergence of a bottleneck structure, as observed in previous work:
for large $\tilde{L}$ the potential energy dominates and leads to
a separation of timescales, where the representation jumps rapidly
from the high dimensional inputs to a low-dimensional representation,
move slowly inside the space of low-dimensional representations, before
jumping back to the potentially high-dimensional outputs. Inspired
by this phenomenon, we train with an adaptive layer step-size
to adapt to the separation of timescales.
\end{abstract}

\section{Introduction}

Feature learning is generally considered to be at the center of the
recent successes of deep neural networks (DNNs), but it also remains
one of the least understood aspects of DNN training.

There is a rich history of empirical analysis of feature learning, for example the appearance of local edge detections in CNNs
with a striking similarity to the biological visual cortex \citep{Krizhevsky_2012_alexnet},
feature arithmetic properties of word embeddings \citep{mikolov_2013_word2vect},
similarities between representations at different layers \citep{kornblith_2019_similarity,li_2024_residual_alignement},
or properties such as Neural Collapse \citep{papyan_2020_neural_collapse}
to name a few. While some of these phenomena have been studied theoretically
\citep{arora_2016_theory_vector_arithm,ethayarajh2018_theory_vector_arithm2,sukenik_2024_neural_collapse},
a general theory of feature learning in DNNs is still lacking.

For shallow networks, there is now strong evidence that the first
weight matrix is able to recognize a low-dimensional projection of the inputs
that determines the output (assuming this structure is present) \citep{bach2017_F1_norm,abbe_2021_staircase,abbe_2022_staircase}.
A similar phenomenon appears in linear networks, where the network
is biased towards learning low-rank functions and low-dimensional
representations in its hidden layers \citep{gunasekar_2018_implicit_bias2,li2020towards,wang_2023_bias_SGD_L2}.
But in both cases the learned features are restricted to depend linearly
on the inputs, and the feature learning happens in the very first
weight matrix, whereas it has been observed that features increase
in complexity throughout the layers \citep{zeiler_2014_visualizing_features_CNN}.

The linear feature learning ability of shallow networks has inspired
a line of work that postulates that the weight matrices learn to align
themselves with the backward gradients and that by optimizing for
this alignment directly, one can achieve similar feature learning
abilities even in deep nets \citep{beaglehole_2023_CNN_AGOP,radhakrishnan2024_AGOP}.

For deep nonlinear networks, a theory that has garnered a lot of interest
is the Information Bottleneck \citep{tishby2015_info_bottleneck},
which observed amongst other things that the inner representations
appear to maximize their mutual information with the outputs, while
minimizing the mutual information with the inputs. A limitation of
this theory is its reliance on the notion of mutual information which
has no obvious definition for empirical distributions, which led
to some criticism \citep{saxe_2018_info_bottleneck_counter}.

A recent theory that is similar to the Information Bottleneck but
with a focus on the dimensionality/rank of the representations and
weight matrices rather than the mutual information is the Bottleneck
rank/Bottleneck structure \citep{jacot_2022_BN_rank,jacot_2023_bottleneck2,wen_2024_BN_CNN}:
which describes how, for large depths, most of the representations
will have approximately the same low dimension, which equals the Bottleneck
rank of the task (the minimal dimension that the inputs can be projected
to while still allowing for fitting the outputs). The intuitive explanation
for this bias is that a smaller parameter norm is required to (approximately)
represent the identity on low-dimensional representations rather than
high dimensional ones. Some other types of low-rank bias have been
observed \citep{galanti_2022_sgd_low_rank,guth2023rainbow}.

In this paper we will focus on describing the Bottleneck structure
in ResNets, and formalize the notion of `cost of identity' as a driving
force for the bias towards low dimensional representation. The ResNet
setup allows us to consider the continuous paths in representation
space from input to output, similar to the NeuralODE \citep{chen_2018_neuralODE},
and by adding weight decay, we can analyze representation geodesics,
which are paths that minimize parameter norm, as already studied in
\citep{owhadi2020_L2_neuralODE}. The appearance of separation of timescales in the layers of ResNets with a modified loss has been mentioned in \cite{geshkovski2022turnpike}, under the name `turnpike principle', but the underlying mechanism for the separation of timescales/turnpike behavior are very different to ours and no low-dimensional bias is observed.

\section{Leaky ResNets}

Our goal is to study a variant of the NeuralODE \citep{chen_2018_neuralODE,owhadi2020_L2_neuralODE}
approximation of ResNet with leaky skip connections and with $L_{2}$-regularization.
The classical NeuralODE describes the continuous evolution of the
activations $\alpha_{p}(x)\in\mathbb{R}^{w}$ starting from $\alpha_{0}(x)=x$
at the input layer $p=0$ and then follows 
\[
\partial_{p}\alpha_{p}(x)=W_{p}\sigma(\alpha_{p}(x))
\]
for the $w\times(w+1)$ matrices $W_{p}$ and the nonlinearity $\sigma:\mathbb{R}^{w}\to\mathbb{R}^{w+1}$
which maps a vector $z$ to $\sigma(z)=(
[z_{1}]_{+}, \dots, [z_{w}]_{+}, 1),$ applying the ReLU nonlinearity entrywise and appending a new entry
with value $1$. Thanks to the appended $1$ we do not need any explicit
bias, since the last column $W_{p,\cdot w+1}$ of the weights replaces the bias.

This can be thought of as a continuous version of the traditional
ResNet with activations $\alpha_{\ell}(x)$ for $\ell=1,\dots,L$: $\alpha_{\ell+1}(x)=\alpha_{\ell}(x)+W_{\ell}\sigma(\alpha_{\ell}(x)).$

We will focus on \textbf{Leaky ResNets}, a variant of ResNets that
interpolate between ResNets and Fully-Connected Neural Networks (FCNNs), by tuning the strength of the
skip connections leading to the following ODE with parameter $\tilde{L}$:
\[
\partial_{p}\alpha_{p}(x)=-\tilde{L}\alpha_{p}(x)+W_{p}\sigma(\alpha_{p}(x)).
\]
This can be thought of as the continuous version of $\alpha_{\ell+1}(x)=(1-\tilde{L})\alpha_{\ell}(x)+W_{\ell}\sigma(\alpha_{\ell}(x))$.
As we will see, the parameter $\tilde{L}$ plays a similar role as
the depth in a FCNN.

Finally we will be interested in describing the paths that minimize a
cost with $L_{2}$-regularization
\[
\min_{W_{p}}\frac{1}{N}\sum_{i=1}^{N}\left\Vert f^{*}(x_{i})-\alpha_{1}(x_{i})\right\Vert ^{2}+\frac{\lambda}{2\tilde{L}}\int_{0}^{1}\left\Vert W_{p}\right\Vert _{F}^{2}dp.
\]
The scaling of $\frac{\lambda}{\tilde{L}}$ for the regularization
term will be motivated in Section \ref{subsec:Effective-Depth}.

This type of optimization has been studied in \citep{owhadi2020_L2_neuralODE}
without leaky connections. In this paper, we describe the large
$\tilde{L}$ behavior which leads to a so-called Bottleneck structure
\citep{jacot_2022_BN_rank,jacot_2023_bottleneck2} as a result of a separation of timescales in $p$.

\subsection{A Few Symmetries\label{subsec:Effective-Depth}}

Changing the leakage parameter $\tilde{L}$ is equivalent (up to constants)
to changing the integration range $[0,1]$ or to scaling the outputs.

\textbf{Integration range:} Consider the weights $W_{p}$ on the range
$[0,1]$ and leakage parameter $\tilde{L}$, leading to activations
$\alpha_{p}$. Then stretching the weights to a new range $[0,c]$,
by defining $W_{q}'=\frac{1}{c}W_{q/c}$ for $q\in[0,c]$, and dividing
the leakage parameter by $c$, stretches the activations $\alpha'_{q}=\alpha_{q/c}$:
\[
\partial_{q}\alpha'_{q}(x)=-\frac{\tilde{L}}{c}\alpha'_{q}(x)+\frac{1}{c}W_{q/c}\sigma(\alpha'_{q}(x))=\frac{1}{c}\partial_{p}\alpha_{q/c}(x),
\]
and the parameter norm is simply divided by $c$:$\int_{0}^{c}\left\Vert W'_{q}\right\Vert ^{2}dq=\frac{1}{c}\int_{0}^{1}\left\Vert W_{p}\right\Vert ^{2}dp$.

This implies that a path on the range $[0,c]$ with leakage parameter
$\tilde{L}=1$ is equivalent to a path on the range $[0,1]$ with
leakage parameter $\tilde{L}=c$ up to a factor of $c$ in front of
the parameter weights. For this reason, instead of modeling different
depths as changing the integration range, we will keep the integration
range to $[0,1]$ for convenience but change the leakage parameter
$\tilde{L}$ instead. To get rid of the factor in front of the integral,
we choose a regularization term of the form $\frac{\lambda}{\tilde{L}}$.
From now on, we call $\tilde{L}$ the (effective) depth of the network.

Note that this also suggests that in the absence of leakage ($\tilde{L}=0$),
changing the range of integration has no effect on the effective depth,
since $2\tilde{L}=0$ too. Instead, in the absence of leakage, the
effective depth can be increased by scaling the outputs as we now
show.

\textbf{Output scaling:} Given a path $W_{p}$ on the $[0,1]$ (for
simplicity, we assume that there are no bias, i.e. $W_{p,\cdot w+1}=0$),
then increasing the leakage by a constant $\tilde{L}\to\tilde{L}+c$
leads to a scaled down path $\alpha'_{p}=e^{-cp}\alpha_{p}$. Indeed
we have $\alpha'_{0}(x)=\alpha_{0}(x)$ and 
\begin{align*}
\partial_{p}\alpha'_{p}(x) & =-(\tilde{L}+c)\alpha'_{p}(x)+W_{p}\sigma(\alpha'_{p}(x))=e^{-cp}\left(\partial_{p}\alpha_{p}(x)-c\alpha_{p}(x)\right)=\partial_{p}(e^{-cp}\alpha_{p}(x)).
\end{align*}

Thus a nonleaky ResNet $\tilde{L}=0$ with very large outputs $\alpha_{1}(x)$
is equivalent to a leaky ResNet $\tilde{L}>0$ with scaled down outputs
$e^{-\tilde{L}}\alpha_{1}(x)$. Such large outputs are common when
training on cross-entropy loss, and other similar losses that are
only minimized at infinitely large outputs. When trained on
such losses, it has been shown that the outputs of neural nets will
keep on growing during training \citep{gunasekar_2018_implicit_bias,chizat_2020_implicit_bias},
suggesting that when training ResNets on such a loss, the effective
depth increases during training (though quite slowly).

\subsection{Lagrangian Reformulation}
\label{section_lagrange}

The optimization of Leaky ResNets can be reformulated, leading to
a Lagrangian form. 

First observe that the weights $W_{p}$ at any minimizer can be expressed
in terms of the matrix of activations $A_{p}=\alpha_{p}(X)\in\mathbb{R}^{w\times N}$
over the whole training set $X\in\mathbb{R}^{w\times N}$ (similar
to \citep{jacot_2022_L2_reformulation}):
\[
W_{p}=(\tilde{L}A_{p}+\partial_{p}A_{p})\sigma(A_{p})^{+}
\]
 where $(\cdot)^{+}$ is the pseudo-inverse. This formula comes from the fact that $W_p$ has minimal parameter norm amongst the weights $W$ that satisfy $\partial_p A_p = -\tilde{L}A_p + W\sigma(A_p)$.

We therefore consider the equivalent optimization over the activations
$A_{p}$:
\[
\min_{A_{p}:A_{0}=X}\frac{1}{N}\left\Vert f^{*}(X)-A_{1}\right\Vert ^{2}+\frac{\lambda}{2\tilde{L}}\int_{0}^{1}\left\Vert \tilde{L}A_{p}+\partial_{p}A_{p}\right\Vert _{K_p}^{2}dp,
\]
where the norm $\left\Vert M\right\Vert _{K_{p}}=\left\Vert M\sigma(A_{p})^{+}\right\Vert _{F}$
corresponds to the scalar product $\left\langle A,B\right\rangle _{K_{p}}=\mathrm{Tr}\left[AK_{p}^{+}B^T\right]$
for the $N\times N $ matrix $K_{p}=\sigma(A_{p})^{T}\sigma(A_{p})$. By convention, we say that $\left\Vert M\right\Vert _{K_{p}}=\infty$
if $M$ does not lie in the image of $K_{p}$, i.e. $\mathrm{Im}M^{T}\nsubseteq\mathrm{Im}K_{p}$.

It can be helpful to decompose this loss along the different neurons
\[
\min_{A_{p}:A_{0}=X}\sum_{i=1}^{w}\frac{1}{N}\left\Vert f_{i}^{*}(X)-A_{1,i}\right\Vert ^{2}+\frac{\lambda}{2\tilde{L}}\int_{0}^{1}\left\Vert \tilde{L}A_{p,i\cdot}+\partial_{p}A_{p,i\cdot}\right\Vert _{K_{p}}^{2}dp,
\]
Leading to a particle flow behavior, where the neurons $A_{p,i\cdot}\in\mathbb{R}^{N}$
are the particles. At first glance, it appears that there is no interaction
between the particles, but remember that the norm $\left\Vert \cdot\right\Vert _{K_{p}}$
depends on the covariance $K_{p}=\sum_{i=1}^{w}\sigma((A_p)_{i\cdot})\sigma((A_p)_{i\cdot})^{T}$, leading to global interactions between neurons.

If we assume that $\mathrm{Im}A_{p}^{T}\subset\mathrm{Im}\sigma(A_{p})^{T}$, we can decompose the inside of the integral as three terms:
\[
\frac{1}{2\tilde{L}}\left\Vert \tilde{L}A_{p}+\partial_{p}A_{p}\right\Vert _{K_{p}}^{2}=\frac{\tilde{L}}{2}\left\Vert A_{p}\right\Vert _{K_{p}}^{2}+\left\langle \partial_{p}A_{p},A_{p}\right\rangle_{K_{p}}+\frac{1}{2\tilde{L}}\left\Vert \partial_{p}A_{p}\right\Vert _{K_{p}}^{2}.
\]


\textbf{Cost of identity $\left\Vert A_{p}\right\Vert _{K_{p}}^{2}$
/ potential energy $-\frac{\tilde{L}}{2}\left\Vert A_{p}\right\Vert _{K_{p}}^{2}$:}
This term can be interpreted as a form of potential energy, since
it only depends on the representation $A_{p}$ and not its derivative
$\partial_{p}A_{p}$. We call it the cost of identity (COI), since
it is the Frobenius norm of the smallest weight matrix $W_{p}$ such
that $W_{p}\sigma(A_{p})=A_{p}$. The COI can be interpreted as measuring
the dimensionality of the representation, inspired by the fact if
the representations $A_{p}$ is non-negative (and there is no bias
$\beta=0$), then $A_{p}=\sigma(A_{p})$ and the COI simply equals
the rank $\left\Vert A_{p}\right\Vert _{K_{p}}^{2}=\mathrm{Rank}A_{p}$
(this interpretation is further justified in Section \ref{subsec:Cost-of-Identity-dimensionality}).
We follow the convention of defining the potential energy as the negative of the
term that appears in the Lagrangian, so that the Hamiltonian equals the sum of energies.

\textbf{Middle term}: The middle term $\left\langle \partial_{p}A_{p},A_{p}\right\rangle _{K_{p}}$ ends up playing a very minor role in the analysis of this paper. This is probably because this term is invariant under reparametrizing the paths $A_{p(q)}$ for any increasing function $p:[0,1]\to[0,1]$, whereas the other two other terms are not. Simply speaking, it depends on the trajectory taken, but not the speed at which one traverses this trajectory. Since the main goal of this paper is to prove a separation of timescales by showing that the network moves rapidly in high-dimensional layers and slowly in  low-dimensional ones, the middle term is mostly irrelevant to our analysis.

\textbf{Kinetic energy $\frac{1}{2\tilde{L}}\left\Vert \partial_{p}A_{p}\right\Vert _{K_{p}}^{2}$:}
This term measures the size of the representation derivative
$\partial_{p}A_{p}$ w.r.t. the $K_{p}$ norm. It favors paths $p\mapsto A_{p}$
that do not move too fast, especially along directions where $\sigma(A_{p})$
is small. This interpretation as a kinetic energy also illustrates how the inverse kernel $K_p^+$ is the analogue of the mass matrix from classical mechanics.

This suggests that the local optimal paths must balance two objectives
that are sometimes opposed: the kinetic energy favors going from input
representation to output representation in a `straight line' that
minimizes the path length, the COI on the other hand favors paths
that spends most of the path in low-dimensional representations that
have a low COI. The balance between these two goals shifts as the
depth $\tilde{L}$ grows, and for large depths it becomes optimal
for the network to rapidly move to a representation of smallest possible
dimension (but not too small that it becomes impossible to map back to
the outputs), remain inside the space of low-dimensional
representations for most of the layers, and finally move rapidly to the output representation;
even if this means doing a large `detour' and having a large kinetic
energy. The main goal of this paper is to describe this general behavior.

Note that one could imagine that as $\tilde{L}\to\infty$ it would
always be optimal to first go to the minimal COI representation which
is the zero representation $A_{p}=0$, but once the network reaches
a zero representation, it can only learn constant representations
afterwards (the matrix $K_{p}=\mathbf{1}\mathbf{1}^{T}$ is then rank
$1$ and its image is the space of constant vectors). So the network
must find a representation that minimizes the COI under the
condition that there is a path from this representation to the outputs.

\begin{rem}
While this interpretation and decomposition is a pleasant and helpful
intuition, it is rather difficult to leverage for theoretical proofs
directly. The problem is that we will focus on regimes where the representations
$A_{p}$ and $\sigma(A_{p})$ are approximately low-dimensional (since
those are the representations that locally minimize the COI), leading
to an unbounded pseudo-inverse $\sigma(A_{p})^{+}$. This is balanced
by the fact that $(\tilde{L}A_{p}+\partial_{p}A_{p})$ is small along
the directions where $\sigma(A_{p})^{+}$ explodes, ensuring a finite
weight matrix norm $\left\Vert \tilde{L}A_{p}+\partial_{p}A_{p}\right\Vert _{K_{p}}^{2}$.
But the suppression of $(\tilde{L}A_{p}+\partial_{p}A_{p})$ along
these bad directions usually comes from cancellations, i.e. $\partial_{p}A_{p}\approx-\tilde{L}A_{p}$.
In such cases, the decomposition in three terms of the Lagrangian
is ill adapted since all three terms are infinite and cancel each
other to yield a finite sum $\left\Vert \tilde{L}A_{p}+\partial_{p}A_{p}\right\Vert _{K_{p}}^{2}$.
One of our goal is to save this intuition and prove a similar decomposition
with stable equivalent to the cost of identity and kinetic energy
where $K_{p}^{+}$
is replaced by the bounded $\left(K_p+\gamma I\right)^{+}$
for the right choice of $\gamma$. 
\end{rem}

\subsection{Cost of Identity as a Measure of Dimensionality\label{subsec:Cost-of-Identity-dimensionality}}
This section shows the relation between the COI of and the dimensionality
of the data. The intuition is simple, the cost of representing the
identity on a $k$-dimensional representation should be $k$, at least
for representations that locally minimize the COI. These local minima of the COI are of interest because the
representations inside the bottleneck are close to local minima of
the COI. 

We define two types of COI, the standard COI (or COI with bias) $\left\Vert A\right\Vert _{K}^{2}$
which is the one that appears in the previous sections, and the COI
without bias $\left\Vert A\right\Vert _{\bar{K}}^{2}$, where for
any activation matrix $A$, we define the covariance with bias $K=\sigma(A)^{T}\sigma(A)$
and without bias $\bar{K}=\bar{\sigma}(A)^{T}\bar{\sigma}(A)$ where
$\bar{\sigma}$ denotes the simple ReLU (without appending a constant
entry), leading to the relation $\bar{K}=K-1_{N}1_{N}^{T}$. 

It is easier to see the relation between the COI without bias and
the dimensionality of the representation. For example if the representation
is nonegative $A\geq0$, we have $\left\Vert A\right\Vert _{\bar{K}}^{2}=\left\Vert A\bar{\sigma}(A)^{+}\right\Vert _{F}^{2}=\left\Vert AA^{+}\right\Vert _{F}^{2}=\mathrm{Rank}A$.
More generally, the COI without bias is lower bounded by a notion
of effective dimension:
\begin{prop}
$\left\Vert A\right\Vert _{\bar{K}}^{2}\geq\frac{\left\Vert A\right\Vert _{*}^{2}}{\left\Vert A\right\Vert _{F}^{2}}$
for the nuclear norm $\left\Vert A\right\Vert _{*}=\sum_{i=1}^{\mathrm{Rank}A}s_{i}(A)$.
\end{prop}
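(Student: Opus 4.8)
The plan is to reduce the statement to one classical fact — that the nuclear norm is controlled by a product of Frobenius norms, $\|XY\|_{*}\le\|X\|_{F}\|Y\|_{F}$ for all conformable $X,Y$ — and then combine it with the defining property of the pseudo-inverse and the contractivity of the ReLU.

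First I would clear away the degenerate cases. If $A=0$ the right-hand side is read as $0$ and there is nothing to prove. If $\mathrm{Im}\,A^{T}\nsubseteq\mathrm{Im}\,\sigma(A)^{T}$, then — by the convention introduced for $\|\cdot\|_{K_{p}}$, noting that $\mathrm{Im}\,\sigma(A)^{T}=\mathrm{Im}\,K$ for $K=\sigma(A)^{T}\sigma(A)$ since $\mathrm{rank}\,K=\mathrm{rank}\,\sigma(A)$ — the quantity $\|A\sigma(A)^{+}\|_{F}^{2}=\|A\|_{K}^{2}$ is set to $+\infty$ and the bound is trivial. So I would assume $A\ne0$ and $\mathrm{Im}\,A^{T}\subseteq\mathrm{Im}\,\sigma(A)^{T}$. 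In that case $\sigma(A)^{+}\sigma(A)$ is the orthogonal projection of $\mathbb{R}^{N}$ onto the row space of $\sigma(A)$, which contains the row space of $A$, so that $A\,\sigma(A)^{+}\sigma(A)=A$; equivalently, writing $W:=A\sigma(A)^{+}$, we have $W\sigma(A)=A$.

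Next I would apply $\|XY\|_{*}\le\|X\|_{F}\|Y\|_{F}$ with $X=W$ and $Y=\sigma(A)$, giving $\|A\|_{*}=\|W\sigma(A)\|_{*}\le\|W\|_{F}\,\|\sigma(A)\|_{F}$. Since $\sigma$ acts coordinatewise and $|[t]_{+}|\le|t|$, we have $\|\sigma(A)\|_{F}\le\|A\|_{F}$. Hence $\|A\|_{*}\le\|A\sigma(A)^{+}\|_{F}\,\|A\|_{F}$, and dividing by $\|A\|_{F}>0$ and squaring yields $\|A\sigma(A)^{+}\|_{F}^{2}\ge\|A\|_{*}^{2}/\|A\|_{F}^{2}$, as claimed.

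The one ingredient that needs proof is $\|XY\|_{*}\le\|X\|_{F}\|Y\|_{F}$, which I would obtain from the (thin) SVD $XY=U\Sigma V^{T}$ with $U,V$ columnwise orthonormal: then $\|XY\|_{*}=\mathrm{Tr}\,\Sigma=\mathrm{Tr}(U^{T}XYV)=\langle X^{T}U,\,YV\rangle_{F}$, and Cauchy–Schwarz gives $\langle X^{T}U,YV\rangle_{F}\le\|X^{T}U\|_{F}\,\|YV\|_{F}\le\|X\|_{F}\,\|Y\|_{F}$, the last step because $\|X^{T}U\|_{F}^{2}=\mathrm{Tr}(XX^{T}UU^{T})\le\mathrm{Tr}(XX^{T})$ using $0\preceq UU^{T}\preceq I$ and $XX^{T}\succeq0$, and symmetrically for $YV$. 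This is standard (it is Schatten–Hölder with $1=\tfrac12+\tfrac12$), so I do not expect a genuine obstacle here; the only points requiring care are the convention in the degenerate case and the pseudo-inverse bookkeeping establishing $W\sigma(A)=A$.
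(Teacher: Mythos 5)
Your proof is correct, and it is essentially a rigorous version of the argument the paper sketches. Both proofs hinge on the same two facts: the contractivity $\|\sigma(A)\|_{F}\le\|A\|_{F}$, and a nuclear-versus-Frobenius trade-off between $A\sigma(A)^{+}$ and $\sigma(A)$. The paper phrases the second step as a minimization — it lower-bounds $\|A\sigma(A)^{+}\|_{F}^{2}$ by $\min_{\|B\|_F\le\|A\|_F}\|AB^{+}\|_{F}^{2}$ and asserts, without proof, that the minimizer is $B=\tfrac{\|A\|_F}{\sqrt{\|A\|_*}}\sqrt{A}$. That claim is actually only true after restricting to $B$ whose row space contains that of $A$ (otherwise one can drive $\|AB^{+}\|_{F}$ to zero), and even then it requires the Cauchy–Schwarz/Hölder computation that the paper elides. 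Your route supplies exactly that missing ingredient: writing $A=(A\sigma(A)^{+})\,\sigma(A)$ and invoking $\|XY\|_{*}\le\|X\|_{F}\|Y\|_{F}$ gives the bound in one line, without needing to exhibit or certify an explicit minimizer. You also correctly flag the degenerate case $\mathrm{Im}\,A^{T}\nsubseteq\mathrm{Im}\,\sigma(A)^{T}$: the factorization $A=A\sigma(A)^{+}\sigma(A)$ (and hence the inequality as a statement about the finite quantity $\|A\sigma(A)^{+}\|_{F}^{2}$) genuinely fails there, so the proposition has to be read with the paper's $\|\cdot\|_{K}=\infty$ convention in mind — something the paper's own proof passes over silently. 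In short: same underlying idea, but your write-up is the one that actually closes the argument.
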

\begin{proof}
Since $\left\Vert \bar{\sigma}(A)\right\Vert _{F}\leq\left\Vert A\right\Vert _{F}$,
we have $\left\Vert A\right\Vert _{\bar{K}}^{2}=\left\Vert A\bar{\sigma}(A)^{+}\right\Vert _{F}^{2}\geq\min_{\left\Vert B\right\Vert _{F}\leq\left\Vert A\right\Vert _{F}}\left\Vert AB^{+}\right\Vert _{F}^{2}$
which is minimized at $B=\frac{\left\Vert A\right\Vert _{F}}{\sqrt{\left\Vert A\right\Vert _{*}}}\sqrt{A}$ leading to a lower bound of $\left\Vert AB^{+}\right\Vert _{F}^{2}=\frac{\left\Vert A\right\Vert _{*}}{\left\Vert A\right\Vert _{F}^{2}} \left\Vert \sqrt{A}\right\Vert _{F}^{2}=\frac{\left\Vert A\right\Vert _{*}^{2}}{\left\Vert A\right\Vert _{F}^{2}}$.
\end{proof}
The stable rank $\frac{\left\Vert A\right\Vert _{*}^{2}}{\left\Vert A\right\Vert _{F}^{2}}$
is upper bounded by $\mathrm{Rank}A$, with equality if all non-zero
singular values of $A$ are equal, and it is lower bounded by the more
common notion of stable rank $\frac{\left\Vert A\right\Vert _{F}^{2}}{\left\Vert A\right\Vert _{op}^{2}}$,
because $\sum s_{i}\max s_{i}\geq\sum s_{i}^{2}$ for the singular
values $s_{i}$.

Note that in contrast to the COI which is a very unstable quantity
because of the pseudo-inverse, the ratio $\frac{\left\Vert A\right\Vert _{*}^{2}}{\left\Vert A\right\Vert _{F}^{2}}$
is continuous except at $A=0$. This also makes it much easier to compute empirically than the COI itself.

The relation between the COI with bias and dimensionality. is less obvious in general, but as we will see, inside the bottleneck the representation will approach local minima of the COI with bias. It turns out that at any local minima $A$ that is in some sense stable under adding more neurons, not only is the representation nonegative, but both COIs must also match and be equal to the dimension:
\begin{prop}
\label{prop:stable_minima_are_positive}A local minimum of $A\mapsto\left\Vert A\right\Vert _{K}^{2}$ is
said to be stable if it remains a local minimum after concatenating
a zero vector $A'=\left(\begin{array}{c}
A\\
0
\end{array}\right)\in\mathbb{R}^{(w+1)\times N}$. All stable minima are non-negative, and satisfy $\left\Vert A\right\Vert _{K}^{2}=\left\Vert A\right\Vert _{\bar{K}}^{2}=\mathrm{Rank}A$.
\end{prop}

These stable minima will play a significant role in the rest of our
analysis, as we will see that for large $\tilde{L}$ the representations
$A_{p}$ of most layers will be close to one such local minimum. Now
we are not able to rule out the existence of non-stable local minima
(nor guarantee that they are avoided with high probability), but
one can show that all strict local minima of wide enough networks
are stable. Actually we can show something stronger, starting from
any non-stable local minimum there is a constant loss path that connects
it to a saddle:
\begin{prop}
\label{prop:non-stable-connected-to-saddle}If $w>N(N+1)$ then if
$\hat{A}\in\mathbb{R}^{w\times N}$ is local minimum of $A\mapsto\left\Vert A\right\Vert_{K}^{2}$
that is not non-negative, then there is a continuous path $A_{t}$
of constant COI such that $A_{0}=\hat{A}$ and $A_{1}$ is a saddle.
\end{prop}

This could explain why a noisy GD would avoid such negative/non-stable
minima, since there is no `barrier' between the minima and a lower
one, one could diffuse along the path described in Proposition \ref{prop:non-stable-connected-to-saddle}
until reaching a saddle and going towards a lower COI minima. But
there seems to be something else that pushes away from such non-negative
minima, as in our experiments with full population GD we have only
observed stable/non-negative local minimas.

\begin{figure}[t]
  \centering
  \begin{subfigure}[b]{0.49\textwidth}
    \includegraphics[width=\textwidth]{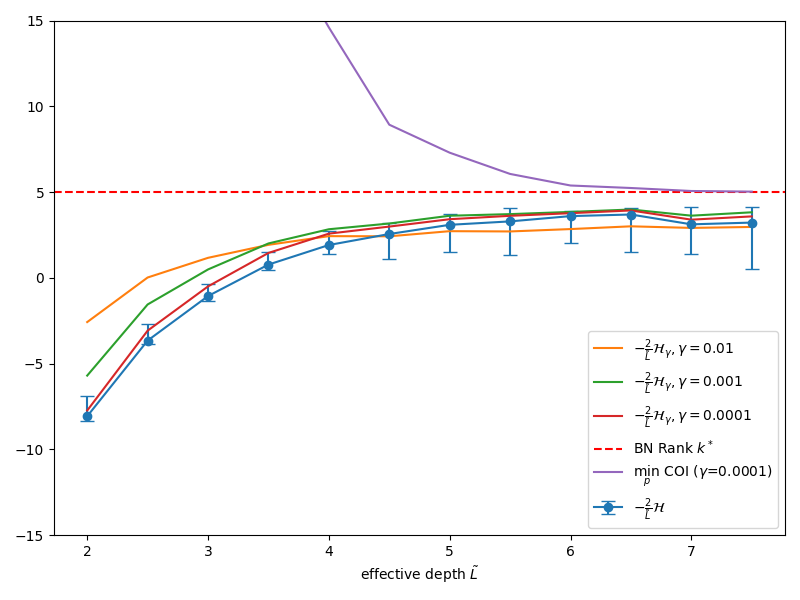}
    \caption{Hamiltonian measures}
    \label{fig:fixed_hamiltonian}
  \end{subfigure}
  \hfill
  \hspace*{-0.5cm}
  \begin{subfigure}[b]{0.49\textwidth}
    \includegraphics[width=\textwidth]{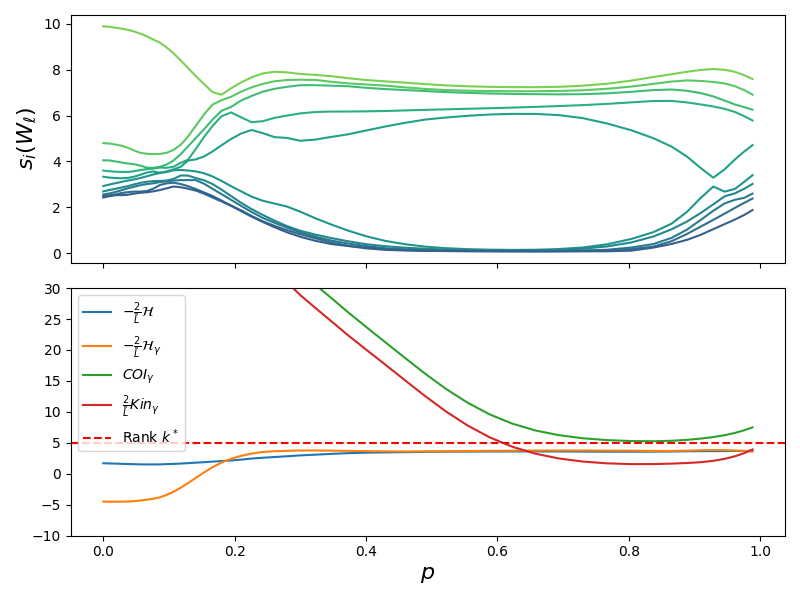}
    \caption{$\tilde{L}=5,\gamma=0.001$}
    \label{fig:resnet_sval}
  \end{subfigure}\\
  \begin{subfigure}[b]{0.49\textwidth}
\includegraphics[width=\textwidth]{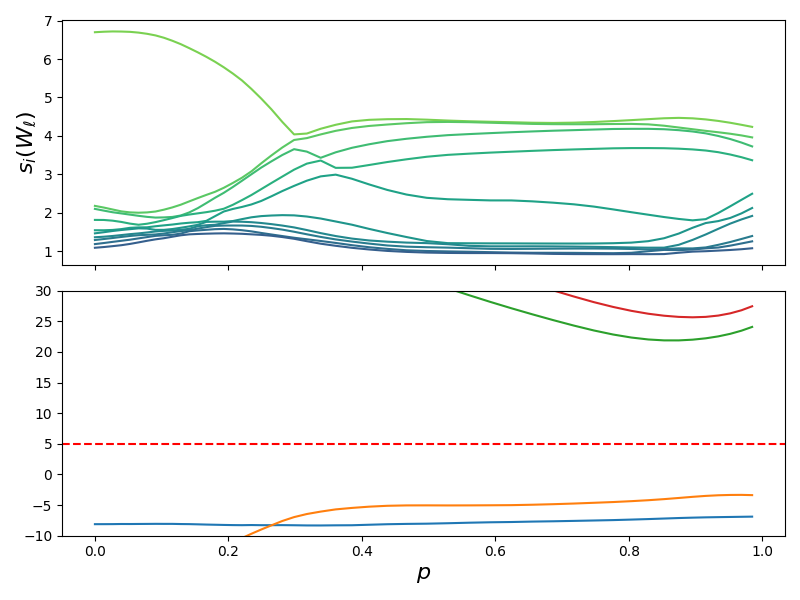}
    \caption{BN structure ($\tilde{L}=2,\gamma=0.001$)}
    \label{fig:resnet_ham}
  \end{subfigure}
  \hfill
  \hspace*{-0.5cm}
  \begin{subfigure}[b]{0.49\textwidth}
    \includegraphics[width=\textwidth]{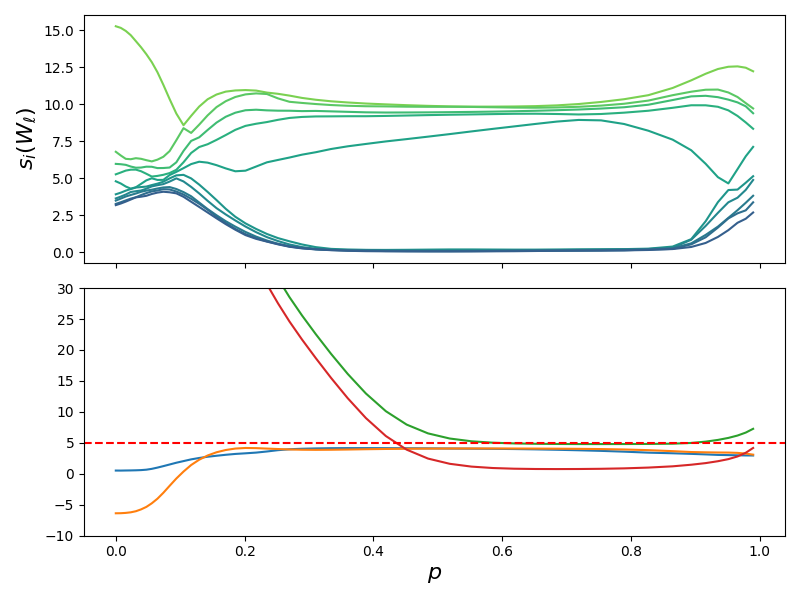}
    \caption{$\tilde{L}=7.5,\gamma=0.001$}
  \end{subfigure} 

  \caption{{\bf Leaky ResNet structures:} We train adaptive networks with a fixed $L=50$ over a range of effective depths $\tilde{L}$. The true function $f^*:\mathbb{R}^{20}\to\mathbb{R}^{20}$ is the composition of two random FCNNs $g_1,g_2$ mapping from dim. 20 to 5 to 20, the network recovers the true rank of $k^*=5$. (a) Estimates of the Hamiltonian constants for networks trained with different $\tilde{L}$. The Hamiltonian refers to $-\frac2{\tilde{L}}\mathcal{H}$ which estimates the true rank $k^*$. The COI refers to $\min_p ||A_p||_{K_p}$. The trend line follows the median estimate for $-\frac2{\tilde{L}}\mathcal{H}$ across each network's layers, whereas the error bars signify its minimum and maximum over $p\in[0,1]$. The "stable" Hamiltonians utilize the relaxation from Theorem \ref{thm:stable_energy_decomposition}. (b,c,d) Top: The 10 largest singular values of $W_p$ throughout the layers, exhibiting a BN structure. Bottom: the rescaled Hamiltonian, stable Hamiltonian, COI and kinetic energy. The Hamiltonian remains constant throughout the layers, and the stable Hamiltonian approximates it well - except in the first layers, where the kinetic energy (and COI) blows up which is in line with the bound of Equation \ref{eq:approx_hamiltonians} in Theorem \ref{thm:stable_energy_decomposition}. Inside the bottleneck, the kinetic energy approaches zero and the COI approaches $k^*$.}
  \label{fig:resnet_struct}
\end{figure}

\subsection{Hamiltonian Reformulation}

We can further reformulate the evolution of the optimal representations
$A_{p}$ in terms of a Hamiltonian, similar to Pontryagin's maximum
principle.

Let us define the backward pass variables $B_{p}=-\frac{1}{\lambda}\partial_{A_{p}}C(A_{1})$ for the cost $C(A)=\frac{1}{N} \| f^*(X)-A \|^2_F$,
which play the role of the `momenta' of $A_{p}$ in this Hamiltonian
interpretation, and follow the backward differential equation
\begin{align*}
B_{1} & =-\frac{1}{\lambda}\partial_{A_{1}}C(A_{1})=\frac{2}{\lambda N}(f^{*}(X)-A_{1})\\
-\partial_{p}B_{p} & =\dot{\sigma}(A_{p})\odot\left[W_{p}^{T}B_{p}\right]-\tilde{L}B_{p}.
\end{align*}
Now at any critical point, we have that $\partial_{W_{p}}C(A_{1})+\frac{\lambda}{\tilde{L}}W_{p}=0$
and thus $W_{p}=-\frac{\tilde{L}}{\lambda}\partial_{A_{p}}C(A_{1})\sigma(A_{p})^{T}=\tilde{L}B_{p}\sigma(A_{p})^{T}$,
leading to joint dynamics for $A_{p}$ and $B_{p}$:

\begin{align*}
\partial_{p}A_{p} & =\tilde{L}(B_{p}\sigma(A_{p})^{T}\sigma(A_{p})-A_{p})\\
-\partial_{p}B_{p} & =\tilde{L}\left(\dot{\sigma}(A_{p})\odot\left[\sigma(A_{p})B_{p}^{T}B_{p}\right]-B_{p}\right).
\end{align*}
These are Hamiltonian dynamics $\partial_{p}A_{p}=\partial_{B_{p}}\mathcal{H}$ and $-\partial_{p}B_{p}=\partial_{A_{p}}\mathcal{H}$ w.r.t. the Hamiltonian
\vspace{-1mm}
\begin{align}\label{eq:hamiltonian_B_A}
\mathcal{H}(A_{p},B_{p})=\frac{\tilde{L}}{2}\left\Vert B_{p}\sigma(A_{p})^{T}\right\Vert ^{2}-\tilde{L}\mathrm{Tr}\left[B_{p}A_{p}^{T}\right].
\end{align}
The Hamiltonian is a conserved quantity, i.e. it is constant in $p$.
It will play a significant role in describing a separation of timescales
that appears for large depths $\tilde{L}$. Another significant advantage
of the Hamiltonian reformulation over the Lagrangian approach is the
absence of the unstable pseudo-inverses $\sigma(A_{p})^{+}$.
\begin{rem}
Note that the Lagrangian and Hamiltonian reformulations have already
appeared in previous work \citep{owhadi2020_L2_neuralODE} for non-leaky
ResNets. Our main contributions are the description in the next section
of the Hamiltonian as the network becomes leakier $\tilde{L}\to\infty$,
the connection to the cost of identity, and the appearance of
a separation of timescales. These structures are harder to observe
in non-leaky ResNets (though they could in theory still appear since
increasing the scale of the outputs is equivalent to increasing the
effective depth $\tilde{L}$ as shown in Section \ref{subsec:Effective-Depth}).

The Lagrangian and Hamiltonian are also very similar to the ones in
\citep{grafke2014_max_likelihood_paths1,Grafke_2019_max_likelihood_paths2},
and the separation of timescales and rapid jumps that we will describe
also bear a strong similarity. Though a difference with our work is
that the norm $\left\Vert \cdot\right\Vert _{K_{p}}$ depends on $A_{p}$
and can be degenerate. 
\end{rem}

\section{Bottleneck Structure in Representation Geodesics}

In deep fully-connected networks, a so-called Bottleneck structure emerges \citep{jacot_2022_BN_rank,jacot_2023_bottleneck2}, where the weight matrices and representations
in the middle layers are approximately low-rank/low-dimensional. This dimension $k$ is consistent across layers,
and can be interpreted as being equal to the so-called Bottleneck
rank of the learned function. This structure has been shown to extend
to CNNs in \citep{wen_2024_BN_CNN}, and we will observe a similar
structure in our leaky ResNets, further showcasing its generality.

More generally, our goal is to describe the `representation geodesics'
of DNNs: the paths in representation space from input
to output representation. The advantage of ResNets (leaky or not)
over FCNNs is that these geodesics can be approximated by continuous paths and are described
by differential equations (as described by the Hamiltonian reformulation). By decomposing the Hamiltonian, we observe a separation of timescales for large depths, with slow  layers with low COI/dimension, and fast layers with high COI/dimension.

\subsection{Separation of Timescales}

If $\mathrm{Im}A_{p}^{T}\subset\mathrm{Im}\sigma(A_{p})^{T}$, we plug in $B_p=\tilde{L}^{-1} W_p \sigma(A_p)^+ = (A_p+\tilde{L}^{-1}\partial_p A_p)K_p^+$ inside equation \ref{eq:hamiltonian_B_A} and obtain that
the Hamiltonian equals the sum of the kinetic and potential energies:
\begin{align}\label{eq:decomposition_Hamiltonian}
\mathcal{H}=\frac{\tilde{L}}{2}\left\Vert  A_p + \tilde{L}^{-1} \partial_p A_p\right\Vert ^{2}_{K_p}-\tilde{L} \left\langle A_{p} + \tilde{L}^{-1} \partial_p A_p, A_p \right\rangle_{K_p}=\frac{1}{2\tilde{L}}\left\Vert \partial_{p}A_{p}\right\Vert_{K_{p}}^{2}-\frac{\tilde{L}}{2}\left\Vert A_{p}\right\Vert _{K_{p}}^{2}.
\end{align}
Since $\left\Vert \partial_{p}A_{p}\right\Vert _{K_{p}}=\tilde{L}\sqrt{\left\Vert A_{p}\right\Vert _{K_{p}}^{2}+\frac{2}{\tilde{L}}\mathcal{H}}$, for large $\tilde{L}$, the derivative $\partial_{p}A_{p}$
is only finite at $p$s where the COI $\left\Vert A_{p}\right\Vert _{K_{p}}^{2}$
is close to $-\frac{2}{\tilde{L}}\mathcal{H}$. On the other hand,
$\partial_{p}A_{p}$ will blow up for all $p$ with a finite gap $\sqrt{\left\Vert A_{p}\right\Vert _{K_{p}}^{2}+\frac{2}{\tilde{L}}\mathcal{H}}>0$
between the COI and the Hamiltonian. This suggests a separation of
timescales as $\tilde{L}\to\infty$, with slow dynamics ($\|\partial_p A_p\|_{K_p} \sim 1$) in layers whose COI/dimension is close to $-\frac{2}{\tilde{L}}\mathcal{H}$
and fast dynamics ($\|\partial_p A_p\|_{K_p} \sim \tilde{L}$) in the high COI/dimension layers.

But the assumption $\mathrm{Im}A_{p}^{T}\subset\mathrm{Im}\sigma(A_{p})^{T}$
seems to rarely be true in practice, and both kinetic and COI are often infinite in practice, canceling each other to produce a finite Hamiltonian. This means that the separation of timescales argument presented in the preceding paragraph needs to be adapted to avoid these explosions. We solve this issue by defining the $\gamma$-stable kinetic energy $\frac{1}{2\tilde{L}}\left\Vert \partial_{p}A_{p}\right\Vert^2_{(K_{p}+\gamma I)}$ and potential energy $\frac{\tilde{L}}{2}\left\Vert A_{p}\right\Vert^2_{(K_{p}+\gamma I)}$. These energies remain bounded as long as $\gamma$ is not too small, and their sum equals the $\gamma$-Hamiltonian $\mathcal{H}_{\gamma,p}$ which is close to the true Hamiltonian $\mathcal{H}$ as long as $\gamma$ is small enough. This allows us to translate the argument presented in the previous paragraph into a formal statement (up to approximation errors):
\begin{thm}
\label{thm:stable_energy_decomposition}
For any geodesic, we have
\[
\mathcal{H}=\frac{1}{2\tilde{L}}\left\Vert \partial_{p}A_{p}+\gamma\tilde{L}B_{p}\right\Vert _{(K_{p}+\gamma I)}^{2}-\frac{\tilde{L}}{2}\left\Vert A_{p}\right\Vert _{(K_{p}+\gamma I)}^{2}-\gamma\frac{\tilde{L}}{2}\left\Vert B_{p}\right\Vert ^{2}.
\]
Therefore if $\left\Vert B_{p}\right\Vert \leq c$, we can bound the
distance between the Hamiltonians
\begin{align}\label{eq:approx_hamiltonians}
\left|\frac{2}{\tilde{L}}\mathcal{H}-\frac{2}{\tilde{L}}\mathcal{H}_{\gamma}\right|\leq\frac{2}{\tilde{L}}\left\Vert \partial_{p}A_{p}\right\Vert _{(K_{p}+\gamma I)}\sqrt{\gamma}c+\gamma c^{2}
\end{align}
and guarantee that the rate of change $\partial_{p}A_{p}$ scales
with $\tilde{L}$ times the extra-dimensionality
\[
\left|\left\Vert \partial_{p}A_{p}\right\Vert _{(K_{p}+\gamma I)}-\tilde{L}\sqrt{\left\Vert A_{p}\right\Vert _{(K_{p}+\gamma I)}^{2}+\frac{2}{\tilde{L}}\mathcal{H}}\right|\leq2\tilde{L}\sqrt{\gamma}c.
\]

Finally we can guarantee that the rescaled Hamiltonian $-\frac{2}{\tilde{L}}\mathcal{H}$
approaches the minimal $\gamma$-COI from below as $\tilde{L}\to\infty$
(up to $\gamma c^{2}$ terms):
\begin{align}\label{eq:Hamiltonian_approx_min_COI}
-\left(\frac{1}{\tilde{L}}\ell_{\gamma,\tilde{L}}+\sqrt{\gamma}c\right)^{2}\leq-\frac{2}{\tilde{L}}\mathcal{H}-\min_{p}\left\Vert A_{p}^{\tilde{L}}\right\Vert _{(K_{p}+\gamma I)}^{2}\leq\gamma c^{2},
\end{align}
for the path length $\ell_{\gamma,\tilde{L}}=\int_{0}^{1}\left\Vert \partial_{p}A_{p}^{\tilde{L}}\right\Vert _{(K_{p}+\gamma I)}dp$.
\end{thm}

In practice, the size of $\|B_p\|^2$ can vary a lot throughout the layers, we therefore suggest choosing a $p$-dependent $\gamma$ (the proof of Theorem \ref{thm:stable_energy_decomposition} can directly be extended to allow for this dependence): $\gamma_p = \gamma_0 \|\sigma(A_p)\|^2_{op}=\gamma_0 \|K_p \|^2_{op}$. There are two motivations for this: first it is natural to have $\gamma$ scale with $K_p$, ; and second, since $W_p = \tilde{L} B_p \sigma(A_p)^T$ is of approximately constant size (thanks to balancedness, see Appendix \ref{sec:balancedness}), we typically have that the size of $B_p$ is inversely proportional to that of $\sigma(A_p)$, so that $\gamma_p \| B_p \|^2$ should remain roughly the same size for all $p$.

Theorem \ref{thm:stable_energy_decomposition} first shows that distance between the Hamiltonian and stable Hamiltonian is small in comparison to scale of the kinetic and potential energies (indeed the term $\frac{2}{\tilde{L}}\left\Vert \partial_{p}A_{p}\right\Vert _{(K_{p}+\gamma I)}\sqrt{\gamma}c$ can be large if the kinetic energy is large, but it will remain small in comparison to the kinetic energy). Since the kinetic energy can be approximated by the Hamiltonian minus the potential energy, the norm of the derivative $\left\Vert \partial_{p}A_{p}\right\Vert _{(K_{p}+\gamma i)}$
is close to $\tilde{L}$ times the `extra-COI' $\sqrt{\left\Vert A_{p}\right\Vert _{(K_{p}+\gamma I)}^{2}+\frac{2}{\tilde{L}}\mathcal{H}} \approx\sqrt{\left\Vert A_{p}\right\Vert _{(K_{p}+\gamma I)}^{2}-\min_{q}\left\Vert A_{q}\right\Vert _{(K_{q}+\gamma I)}^{2}}$ (where the approximation of the Hamiltonian by the minimal COI for large $\tilde{L}$ comes from equation \ref{eq:Hamiltonian_approx_min_COI}),
which describes the separation of timescales, with slow ($\|\partial_p A_p\|_{K_p+\gamma I} \sim1$)
dynamics at layers $p$ where the COI is almost optimal and fast ($\|\partial_p A_p\|_{K_p+\gamma I}\sim\tilde{L}$)
dynamics everywhere the COI is far from optimal.

Assuming a finite length $\ell_{\gamma,\tilde{L}}<\infty$, the norm
of the derivative must be finite at almost all layers, which is only possible if the COI/dimensionality is optimal in almost all layers, with
only a countable number of short high COI/dimension jumps. Empirically, we observe that these jumps
typically appear at the beginning and end of the network, because
the input and output dimensionality where the COI is fixed and thus non-optimal in general. This explains the fast
regions close to the beginning and end of the network. We have actually
never observed any jump in the middle of the network, though we are
not able to rule them out theoretically.

If we assume that the paths $A_{p}$ are stable under
adding a neuron, then we can additionally guarantee that the representations
in the slow layers (`inside the Bottleneck') will be non-negative:
\begin{prop}
\label{prop:stable_path_positive}Let $A_{p}^{\tilde{L}}$ be a uniformly
bounded sequence of local minima for increasing $\tilde{L}$, at any
$p_{0}\in(0,1)$ such that $\left\Vert \partial_{p}A_{p}\right\Vert $
is uniformly bounded in a neighborhood of $p_{0}$ for all $\tilde{L}$,
then $A_{p_{0}}^{\infty}=\lim_{\tilde{L}}A_{p_{0}}^{\tilde{L}}$ is
non-negative if it exists.
\end{prop}

We therefore know that the optimal COI $\min_{q}\left\Vert A_{q}\right\Vert _{(K_{q}+\gamma I)}^{2}$
is close to the dimension of the limiting representations $A_{p_{0}}^{\infty}$,
i.e. it must be an integer $k^{*}$ which we call the Bottleneck rank
of the sequence of minima since it is closely related to the Bottleneck
rank introduced in \citep{jacot_2022_BN_rank}. The Hamiltonian $\mathcal{H}$
is then close to $-\frac{\tilde{L}}{2}k^{*}$.

Figure \ref{fig:resnet_struct} illustrates these phenomena: the unstable and stable Hamiltonians approach the rank $k^*=5$ from below, while the minimal COI approaches it from above; The kinetic energy is proportional to the extra COI, and they are both large towards the beginning and end of the network where the weights $W_p$ are higher dimensional. We see in Figure \ref{fig:resnet_ham} that the hamiltonian and stable Hamiltonian are not exactly constant, but it still varies significantly less than the kinetic and potential energies.

Because of the non-convexity of the loss we are considering, there are likely distinct sequences of local minima as $\tilde{L}\to\infty$ of different ranks, depending on what low-dimension they reach inside their bottleneck. Indeed in our experiments we have seen that the number of dimensions that are kept inside the bottleneck can vary by 1 or 2, and in FCNN distinct sequences of depth increasing minima with different ranks have been observed in \citep{jacot_2023_bottleneck2}.

\begin{figure}[t] 
  \centering  
  \hspace*{-0.6cm}
  \begin{subfigure}[b,valign=t]{0.36\textwidth}  
    \includegraphics[width=\textwidth,valign=t]{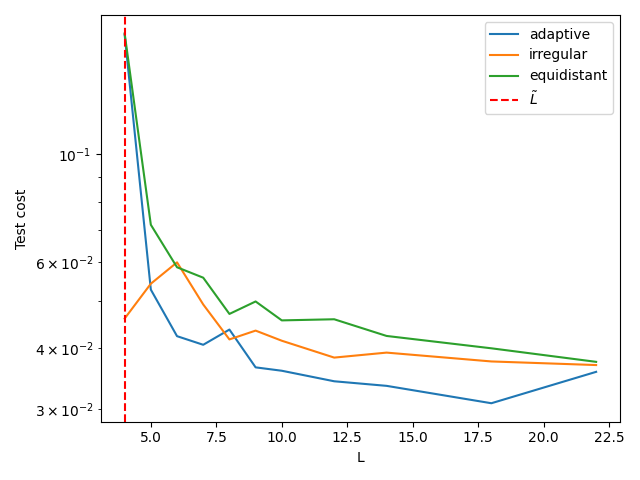}  
    \caption{Test performance versus depth}
    \label{fig:test_error}
  \end{subfigure}
  \hspace*{-0.4cm}
  \hfill  
  \begin{subfigure}[b,valign=t]{0.36\textwidth}
    \includegraphics[width=\textwidth,valign=t]{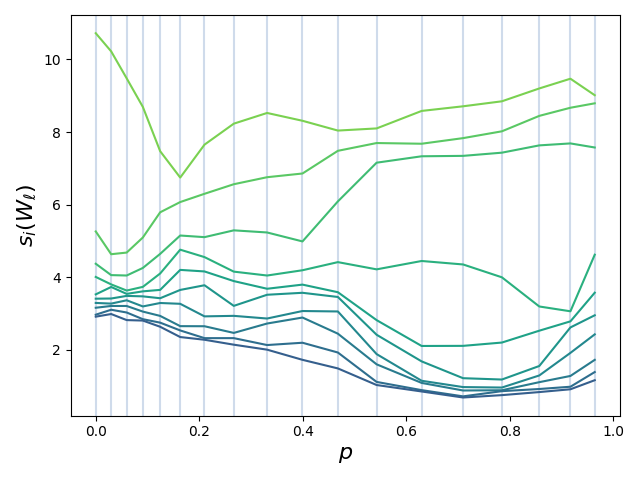}  
    \caption{Bottleneck structure and adaptivity.}
    \label{fig:rhos}
  \end{subfigure}
  \hspace*{-0.4cm}
  \hfill  
  \begin{subfigure}[b,valign=t]{0.33\textwidth}
    \includegraphics[width=\textwidth,valign=t]{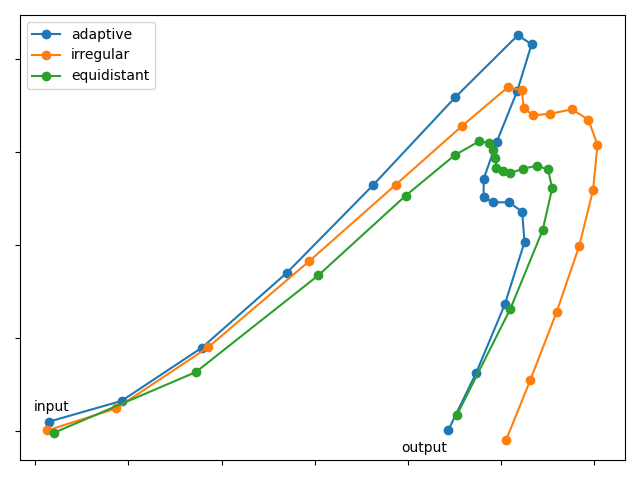}\\
    \vspace{2mm}
    \caption{Paths}
    \label{fig:paths}
  \end{subfigure}
  \hspace*{-0.6cm}

  \caption{\textbf{Discretization:} We train networks with a fixed $\tilde{L}=3$ over a range of depths $L$ and definitions of $\rho_\ell$s. The true function $f^*:\mathbb{R}^{30}\to\mathbb{R}^{30}$ is the composition of three random ResNets $g_1,g_2,g_3$ mapping from dim. 30 to 6 to 3 to 30. (a) Test error as a function of $L$ for different discretization schemes. (b) Weight spectra across layers for adaptive $\rho_\ell$ ($L=18$), grey vertical lines represents the steps $p_\ell$. The Bottleneck structure is more complex for this task, reflecting the more complex true function $f^*=g_3\circ g_2 \circ g_1$: we see a dimension 3 bottleneck around $p=0.7$, but the network remains at an intermediate dimension aroung $p=0.3$, perhaps reflecting the intermediate dimension of $6$ in the true function. (c) 2D projection of the representation paths $A_p$ for $L=18$. Observe how adaptive $\rho_\ell$s appears to better spread out the steps.}
  \label{fig:discrete}
\end{figure}

\section{Discretization Scheme}
To use such Leaky ResNets in practice, we need to discretize over
the range $[0,1]$. For this we choose a set of layer-steps $\rho_{1},\dots,\rho_{L}$
with $\sum\rho_{\ell}=1$, and define the activations at the locations
$p_{\ell}=\rho_{1}+\dots+\rho_{\ell}\in[0,1]$ recursively as 
\begin{align*}
\alpha_{p_{0}}(x) & =x\\
\alpha_{p_{\ell}}(x) & =(1-\rho_{\ell}\tilde{L})\alpha_{p_{\ell-1}}(x)+\rho_{\ell}W_{p_{\ell}}\sigma\left(\alpha_{p_{\ell-1}}(x)\right)
\end{align*}
and the regularized cost $\mathcal{L}(\theta)=C(\alpha_{1}(X))+\frac{\lambda}{2\tilde{L}}\sum_{\ell=1}^{L}\rho_{\ell}\left\Vert W_{p_{\ell}}\right\Vert ^{2}$,
for the parameters $\theta=(W_{p_{1}},\dots,W_{p_{L}})$. Note that
it is best to ensure that $\rho_{\ell}\tilde{L}$ remains smaller
than $1$ so that the prefactor $(1-\rho_{\ell}\tilde{L})$ does not
become negative, though we will also discuss certain setups where
it might be okay to take larger layer-steps. 

Now comes the question of how to choose the $\rho_{\ell}$s. We consider
three options:

\textbf{Equidistant: }The simplest choice is to choose equidistant
points $\rho_{\ell}=\frac{1}{L}$. Note that the condition $\rho_{\ell}\tilde{L}<1$
then becomes $L>\tilde{L}$. But this choice might be ill adapted
in the presence of a Bottleneck structure due to the separation of
timescales.

\textbf{Irregular:} Since we typically observe that the fast layers
appear close to the inputs and outputs with a slow bottleneck in the
middle, one could simply choose the $\rho_{\ell}$ to be go from small
to large and back to small as $\ell$ ranges from $1$ to $L$. This
way there are many discretized layers in the fast regions close to
the input and output and not too many layers inside the Bottleneck
where the representations are changing less. More concretely one can
choose $\rho_{\ell}=\frac{1}{L}+\frac{a}{L}(\frac{1}{4}-\left|\frac{\ell}{L}-\frac{1}{2}\right|)$
for $a\in[0,1)$, the choice $a=0$ leads to an equidistant mesh,
but increasing $a$ will lead to more points close to the inputs and
outputs. To guarantee $\rho_{\ell}\tilde{L}<1$, we need $L>(1+a\frac{1}{4})\tilde{L}$.

\textbf{Adaptive:} This can be further improved by using adaptive $\rho$s. Ideally we would like to choose $\rho_{\ell}=\frac{c_{\ell}}{\sum_k c_{k}}$ for $c_{\ell}=\nicefrac{\|A_{p_\ell}\|}{\|\partial_p A_{p_\ell}\|}$ to guarantee that the distances $\left\Vert A_{p_{\ell}}-A_{p_{\ell-1}}\right\Vert / \|A_{p_{\ell}}\|$
are approximately the same for all $\ell$. Approximating the derivative with a finite difference, we update $\rho_{\ell}\leftarrow\frac{\tilde{c}_{\ell}}{\sum_k \tilde{c}_{k}}$ for $\tilde{c}_\ell=\nicefrac{\rho_\ell \|A_{p_\ell}\|}{\|A_{p_{\ell}} - A_{p_{\ell-1}}\|}$ every few training steps. For large networks, this has negligible computational cost (an approx. 2\% longer training time in some experiments).


Figure \ref{fig:discrete} illustrates the effect of the choice of $\rho_\ell$ for different depths $L$, we see a small but consistent advantage in the test error when using adaptive or irregular $\rho_\ell$s. Looking at the resulting Bottleneck structure, we see that the adaptive $\rho_\ell$s adapts to the bottleneck structure, putting more steps in the first and last layers, and less in the low-dimensional middle layers. 

\section{Conclusion}

We have described the representation geodesics $A_{p}$
of Leaky ResNets and decomposed the Hamiltonian as the sum of a kinetic and potential energy, where the kinetic
energy measures the size of the derivative $\partial_{p}A_{p}$, while
the potential energy is inversely proportional to the cost of identity,
which is a measure of dimensionality of the representations. As the
effective depth of the network grows, the potential energy dominates
and we observe a separation of timescales that leads to a Bottleneck structure.

\section*{Acknowledgments}
We thank Robert Kohn for the insightful discussions and ideas.

\bibliography{main}

\newpage{}

\appendix

\section{Proofs}
\subsection{Cost of Identity}
Here are the proofs for the two Propositions of section \ref{subsec:Cost-of-Identity-dimensionality}.

\begin{prop}[Proposition \ref{prop:stable_minima_are_positive} in the main]
A local minimum of $A\mapsto\left\Vert A\right\Vert _{K}^{2}$ is
said to be stable if it remains a local minimum after concatenating
a zero vector $A'=\left(\begin{array}{c}
A\\
0
\end{array}\right)\in\mathbb{R}^{(w+1)\times N}$. All stable minima are non-negative, and satisfy $\left\Vert A\right\Vert _{K}^{2}=\left\Vert A\right\Vert _{\bar{K}}^{2}=\mathrm{Rank}A$.
\end{prop}

\begin{proof}
At a critical point of the COI with bias $A\mapsto\left\Vert A\right\Vert _{K}^{2}$,
the derivative w.r.t. to scaling the representation $A$ up must be
zero, i.e. 
\begin{align*}
0 & =\partial_{s}\mathrm{Tr}\left[s^{2}A^{T}A\left(s^{2}\bar{K}+\mathbf{1}_{N}\mathbf{1}_{N}^{T}\right)^{+}\right]_{\big|s=1}\\
 & =2\mathrm{Tr}\left[A^{T}AK^{+}\right]-2\mathrm{Tr}\left[A^{T}AK^{+}\bar{K}K^{+}\right]\\
 & =2\mathbf{1}_{N}^{T}K^{+}A^{T}AK^{+}\mathbf{1}_{N},
\end{align*}
which implies that $AK^{+}\mathbf{1}_{N}=0$.

Furthermore, since $A$ is a stable minima, the COI of the nearby
point $\left(\begin{array}{c}
A\\
\epsilon z
\end{array}\right)$ for $z\in\mathrm{Im}\sigma(A)^{T}$ 
\[
\mathrm{Tr}\left[(A^{T}A+\epsilon^{2}zz^{T})\left(K+\epsilon^{2}\bar{\sigma}(z)\bar{\sigma}(z)^{T}\right)^{+}\right]=\left\Vert A\sigma(A)^{+}\right\Vert ^{2}+\epsilon^{2}\left\Vert z^{T}\sigma(A)^{+}\right\Vert ^{2}-\epsilon^{2}\left\Vert \bar{\sigma}(z)^{T}K^{+}A^{T}\right\Vert ^{2}+O(\epsilon^{4}),
\]
must not be smaller than $\left\Vert A\sigma(A)^{+}\right\Vert ^{2}$
for small $\epsilon$. This implies that
\[
z^{T}K^{+}z=\left\Vert z^{T}\sigma(A)^{+}\right\Vert ^{2}\geq\left\Vert \bar{\sigma}(z)^{T}K^{+}A^{T}\right\Vert ^{2}=\bar{\sigma}(z)^{T}K^{+}A^{T}AK^{+}\bar{\sigma}(z).
\]
Let us now choose $z=\bar{K}_{i}=K_{i}-\mathbf{1}_{N}$, which has
positive entries so that $\bar{\sigma}(\bar{K}_{i})=\bar{K}_{i}$
and
\begin{align*}
\bar{K}_{i}^{T}K^{+}\bar{K}_{i}^{T}\geq\bar{K}_{i}^{T}K^{+}A^{T}AK^{+}\bar{K}_{i}.
\end{align*}
Both sides can be simplified:
\[
\bar{K}_{i}^{T}K^{+}\bar{K}_{i}^{T}=\left\Vert \sigma(A_{i})\right\Vert ^{2}-2K_{i}^{T}K^{+}\mathbf{1}_{N}+\mathbf{1}_{N}K^{+}\mathbf{1}_{N}=\left\Vert \sigma(A_{i})\right\Vert ^{2}-2+\mathbf{1}_{N}K^{+}\mathbf{1}_{N}
\]
since $K_{i}^{T}K^{+}\mathbf{1}_{N}=e_{i}P_{\mathrm{Im}K}1_{N}=e_{i}1_{N}=1$
because $1_{N}$ lies in the image of $K$; and since $AK^{+}\mathbf{1}_{N}=0$
\begin{align*}
\bar{K}_{i}^{T}K^{+}A^{T}AK^{+}\bar{K}_{i} & =\left\Vert A_{i}\right\Vert ^{2}-2K_{i}^{T}K^{+}A^{T}AK^{+}\mathbf{1}_{N}+\mathbf{1}_{N}^{T}K^{+}A^{T}AK^{+}\mathbf{1}_{N}=\left\Vert A_{i}\right\Vert ^{2}.
\end{align*}
This implies that 
\[
\left\Vert \sigma(A_{i})\right\Vert ^{2}-2+\mathbf{1}_{N}K^{+}\mathbf{1}_{N}\geq\left\Vert A_{i}\right\Vert ^{2}.
\]
But we have $\mathbf{1}_{N}K^{+}\mathbf{1}_{N}\leq1$ since
\begin{align*}
\mathbf{1}_{N}K^{+}\mathbf{1}_{N} & =\lim_{\gamma\searrow0}\mathbf{1}_{N}\left(K+\gamma I\right)^{-1}K\left(K+\gamma I\right)^{-1}\mathbf{1}_{N}\\
 & \leq\lim_{\gamma\searrow0}\mathbf{1}_{N}\left(K+\gamma I\right)^{-1}\left(K+\gamma I\right)\left(K+\gamma I\right)^{-1}\mathbf{1}_{N}\\
 & =\lim_{\gamma\searrow0}\mathbf{1}_{N}\left(K+\gamma I\right)^{-1}1_{N},
\end{align*}
and by Shermann-Morrison formula:
\[
\mathbf{1}_{N}\left(\bar{K}+1_{N}1_{N}^{T}+\gamma I\right)^{-1}\mathbf{1}_{N}=\mathbf{1}_{N}\left(\bar{K}+\gamma I\right)^{-1}\mathbf{1}_{N}-\frac{\left(\mathbf{1}_{N}\left(\bar{K}+\gamma I\right)^{-1}\mathbf{1}_{N}\right)^{2}}{1+\mathbf{1}_{N}\left(\bar{K}+\gamma I\right)^{-1}\mathbf{1}_{N}}=\frac{\mathbf{1}_{N}\left(\bar{K}+\gamma I\right)^{-1}\mathbf{1}_{N}}{1+\mathbf{1}_{N}\left(\bar{K}+\gamma I\right)^{-1}\mathbf{1}_{N}}\leq1,
\]
with equality if and only if $\lim_{\gamma\searrow0}\mathbf{1}_{N}\left(\bar{K}+\gamma I\right)^{-1}\mathbf{1}_{N}=\infty$
which happens when $1_{N}$ does not lie in the image of $\bar{K}$.

This leads to the bound $\left\Vert \sigma(A_{i})\right\Vert ^{2}-1\geq\left\Vert A_{i}\right\Vert ^{2}+1$,
but in the other direction, we know $\left\Vert \sigma(A_{i})\right\Vert ^{2}\leq\left\Vert A_{i}\right\Vert ^{2}+1$,
with equality if and only if $A_{i}$ has non-negative entries, since
the ReLU satisfies $\left|\sigma(x)\right|\leq\left|x\right|$ with
equality on non-negative $x$. This implies that $A_{i}$ has non-negative
entries, and that $\mathbf{1}_{N}K^{+}\mathbf{1}_{N}=1$.

Furthermore, we have $\left\Vert A\right\Vert _{K}^{2}\leq\left\Vert A\right\Vert _{\bar{K}}^{2}$
and 
\begin{align*}
\left\Vert A\right\Vert _{K}^{2} & =\lim_{\gamma\searrow0}\mathrm{Tr}\left[\bar{K}(\bar{K}+1_{N}1_{N}+\gamma I)^{-1}\right]\\
 & =\lim_{\gamma\searrow0}\left\Vert A\right\Vert _{\bar{K}+\gamma I}^{2}-\frac{1_{N}(\bar{K}+\gamma I)^{-1}\bar{K}(\bar{K}+\gamma I)^{-1}1_{N}}{1+1_{N}^{T}(\bar{K}+\gamma I)^{-1}1_{N}^{T}}\\
 & \geq\lim_{\gamma\searrow0}\left\Vert A\right\Vert _{\bar{K}+\gamma I}^{2}-\frac{1_{N}(\bar{K}+\gamma I)^{-1}1_{N}}{1+1_{N}^{T}(\bar{K}+\gamma I)^{-1}1_{N}^{T}}\\
 & =\left\Vert A\right\Vert _{\bar{K}}^{2},
\end{align*}
since $\lim_{\gamma\searrow0}\mathbf{1}_{N}\left(\bar{K}+\gamma I\right)^{-1}\mathbf{1}_{N}=\infty$
because $\mathbf{1}_{N}K^{+}\mathbf{1}_{N}=1$. Therefore
\[
\left\Vert A\right\Vert _{K}^{2}=\left\Vert A\right\Vert _{\bar{K}}^{2}=\mathrm{Rank}A.
\]
\end{proof}

\begin{prop}[Proposition \ref{prop:non-stable-connected-to-saddle} in the main.]
If $w>N(N+1)$ then if $\hat{A}\in\mathbb{R}^{w\times N}$ is local
minimum of $A\mapsto\left\Vert A\sigma(A)^{+}\right\Vert _{F}^{2}$
that is not non-negative, then there is a continuous path $A_{t}$
of constant COI such that $A_{0}=\hat{A}$ and $A_{1}$ is a saddle.
\end{prop}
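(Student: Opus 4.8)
The plan is to exploit the extra width $w>N(N+1)$ to travel, at constant cost of identity (COI), from $\hat A$ to a configuration with a zero row, and then to recognise that configuration as a saddle via the computation already carried out in the proof of Proposition~\ref{prop:stable_minima_are_positive}. The starting observation is that the COI depends on $A$ only through the pair of Gram matrices $G(A):=(A^{T}A,\;\sigma(A)^{T}\sigma(A))\in\mathrm{Sym}(N)\times\mathrm{Sym}(N)$, since $\|A\sigma(A)^{+}\|_{F}^{2}=\mathrm{Tr}[A^{T}A(\sigma(A)^{T}\sigma(A))^{+}]$. Decomposing over rows, $G(\hat A)=\sum_{i=1}^{w}\big(\hat A_{i\cdot}\hat A_{i\cdot}^{T},\;\sigma(\hat A_{i\cdot})\sigma(\hat A_{i\cdot})^{T}\big)$, and since $w>N(N+1)=\dim\big(\mathrm{Sym}(N)\times\mathrm{Sym}(N)\big)$ these $w$ contributions are linearly dependent: there are $c_{1},\dots,c_{w}$, not all zero, with $\sum_{i}c_{i}\,\hat A_{i\cdot}\hat A_{i\cdot}^{T}=0$ and $\sum_{i}c_{i}\,\sigma(\hat A_{i\cdot})\sigma(\hat A_{i\cdot})^{T}=0$. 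Because $\sigma$ is positively $1$-homogeneous, rescaling row $i$ by $\sqrt{1+tc_{i}}\ge 0$ multiplies its contribution to both Gram matrices by $1+tc_{i}$; hence the path $A_{t}:=\mathrm{diag}\big(\sqrt{1+tc_{1}},\dots,\sqrt{1+tc_{w}}\big)\hat A$ has $G(A_{t})=G(\hat A)$, so its COI is exactly constant, for all $t$ with every $1+tc_{i}\ge 0$.

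Next, $\hat A$ has no zero row: if it did, write it (after reordering) as having last row $0$ with $B$ the remaining rows, which must be not non-negative since the negative entry cannot sit in a zero row; then the expansion in the proof of Proposition~\ref{prop:stable_minima_are_positive}, with $z=\sigma(B)^{T}\sigma(B_{\cdot i})$ for a column $i$ on which $B$ is not non-negative, strictly decreases the COI, contradicting that $\hat A$ is a local minimum. Therefore the $c_{i}$ cannot all have one sign (if e.g.\ all $c_{i}\ge 0$, then $\sum_{i}c_{i}\hat A_{i\cdot}\hat A_{i\cdot}^{T}=0$ forces $\hat A_{i\cdot}=0$ for every $i$ with $c_{i}>0$, a zero row), so $\min_{j}c_{j}<0<\max_{j}c_{j}$. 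Fix a row $r_{0}$ of $\hat A$ carrying a negative entry. Running $A_{t}$ with $t$ increasing zeroes out exactly the rows with $c_{i}=\min_{j}c_{j}$ at the boundary value $t^{*}=-1/\min_{j}c_{j}>0$; running it with $t$ decreasing zeroes out the rows with $c_{i}=\max_{j}c_{j}$. Since $c_{r_{0}}$ can equal the min or the max but not both, we choose the direction of travel so that $r_{0}$ is untouched; the endpoint $A^{(1)}:=A_{t^{*}}$ then has at least one zero row, every other row only positively rescaled, and in particular it is still not non-negative.

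It remains to show $A^{(1)}$ is a saddle. For $t$ small enough that $A_{t}$ stays in a ball on which $\hat A$ globally minimises the COI, $A_{t}$ is itself a local minimiser, hence critical; and $t\mapsto\nabla(\mathrm{COI})(A_{t})$ is real-analytic on the open segment $(0,t^{*})$ because the sign pattern of the nonzero entries of $A_{t}$ is constant and $\sigma(A_{t})^{T}\sigma(A_{t})$ is literally constant, so the path crosses no ReLU kink and no pseudo-inverse rank jump; a real-analytic map vanishing near $t=0$ vanishes on all of $(0,t^{*})$, and by continuity $\nabla(\mathrm{COI})(A^{(1)})=0$. Finally, singling out one zero row of $A^{(1)}$ and letting $B$ be the remaining rows, the expansion from the proof of Proposition~\ref{prop:stable_minima_are_positive} under a perturbation $\epsilon v$ of that zero row reads $\mathrm{COI}(A^{(1)})+\epsilon^{2}\big(\|v^{T}\sigma(B)^{+}\|^{2}-\|\sigma(v)^{T}\sigma(B)^{+}\sigma(B)^{+T}B^{T}\|^{2}\big)+O(\epsilon^{4})$; taking $v=\sigma(B)^{T}\sigma(B_{\cdot i})$ for a column $i$ on which $B$ is not non-negative makes the $\epsilon^{2}$-coefficient strictly negative (exactly as in Proposition~\ref{prop:stable_minima_are_positive}), so $A^{(1)}$ is not a local minimum, while taking $v=-e_{j}\le 0$ for a $j$ with $e_{j}^{T}\sigma(B)^{+}\neq 0$ (such $j$ exists since $\sigma(B)\neq 0$, because the COI along the path equals the nonzero constant $\mathrm{COI}(\hat A)$) kills the negative term and makes the coefficient strictly positive, so $A^{(1)}$ is not a local maximum. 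Hence $A^{(1)}$ is a saddle and $t\mapsto A_{t}$, reparametrised to $[0,1]$, is the desired constant-COI path.

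The step I expect to be the real obstacle is this last criticality claim $\nabla(\mathrm{COI})(A^{(1)})=0$: the COI is non-smooth both through the ReLU (wherever entries of $\hat A$ vanish) and through the pseudo-inverse (wherever the rank of $\sigma(A)^{T}\sigma(A)$ jumps), so "not a local minimum" does not automatically upgrade to "saddle". What rescues it is that along our specific path both Gram matrices are held constant, confining the path to a stratum on which the COI is analytic; nevertheless the subcase where $\hat A$ has zero entries (so the path sits on the boundary of a sign cell rather than in its interior) needs a careful one-sided/stratified version of the analytic-continuation argument, and one should separately dispose of the degenerate case $\sigma(\hat A)=0$, in which the COI is identically $0$, every point at that level is a global minimum, and the statement must be read as vacuous.
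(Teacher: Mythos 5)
Your construction is essentially the one in the paper: produce coefficients that rescale the rows of $\hat A$ so that both Gram matrices $A^T A$ and $\sigma(A)^T \sigma(A)$ (hence the COI) stay fixed while at least one row is driven to zero, then invoke the converse of Proposition~\ref{prop:stable_minima_are_positive} at the zero-row, still-not-non-negative endpoint. Two small differences: the paper obtains the coefficients via Carath\'eodory's theorem for convex cones, whereas you use bare linear dependence in $\mathrm{Sym}(N)\times\mathrm{Sym}(N)$ (dimension $N(N+1)$); the two are interchangeable here and yours is arguably the more elementary phrasing. And the paper argues the whole path stays not non-negative because non-negativity is determined by the preserved Gram pair (strictly one should compare $K$ with $K^\sigma - \mathbf{1}\mathbf{1}^T$ to account for the appended bias row), while you instead track a specific row with a negative entry and pick the direction of travel so its scaling factor stays positive; both work, and yours avoids the bias wrinkle.

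Where you genuinely add to the paper is the final ``saddle'' step. The paper leaps from ``not a local minimum'' to ``saddle'' without establishing that the endpoint is a critical point (nor excluding a local maximum); you correctly flag that this is not automatic for a function that is non-smooth through the ReLU and the pseudo-inverse, and you sketch a stratified analyticity argument for criticality plus a separate perturbation to rule out local-max. That is a real gain in rigor, though two points still need tightening: the analyticity/continuation step needs a genuinely one-sided, stratum-by-stratum treatment when $\hat A$ already has zero entries (which you acknowledge), and the ``not a local max'' perturbation $v=-e_j$ should be reconciled with the restriction $z\in\mathrm{Im}\,\sigma(B)^T$ under which the $\epsilon^2$-expansion in the proof of Proposition~\ref{prop:stable_minima_are_positive} was derived. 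Neither issue is fully closed in your sketch, but since the paper's own proof of this step is simply absent, on balance yours is the more careful argument for the same conclusion.
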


\begin{proof}
The local minimum $\hat{A}$ leads to a pair of $N\times N$ covariance
matrices $\hat{K}=\hat{A}^{T}\hat{A}$ and $\hat{K}^{\sigma}=\sigma(\hat{A})^{T}\sigma(\hat{A})$.
The pair $(\hat{K},\hat{K}^{\sigma})$ belongs to the conical hull
$\mathrm{Cone}\left\{ (\hat{A}_{i\cdot}\hat{A}_{i\cdot}^{T},\sigma(\hat{A}_{i\cdot})\sigma(\hat{A}_{i\cdot})^{T}):i=1,\dots,w\right\} $.
Since this cone lies in a $N(N+1)$-dimensional space (the space
of pairs of symmetric $N\times N$ matrices), we know by Caratheodory's
theorem (for convex cones) that there is a conical combination $(\hat{K},\hat{K}^{\sigma}-\beta^{2}\mathbf{1}_{N\times N})=\sum_{i=1}^{w}a_{i}(\hat{A}_{i\cdot}\hat{A}_{i\cdot}^{T},\sigma(\hat{A}_{i\cdot})\sigma(\hat{A}_{i\cdot})^{T})$
such that no more than $N(N+1)$ of the coefficients are non-zero.
We now define $A_{t}$ to have lines $A_{t,i\cdot}=\sqrt{(1-t)+ta_{i}}\hat{A}_{i\cdot}$,
so that $A_{t=0}=\hat{A}$ and at $t=1$ at least one line of $A_{t=1}$
is zero (since at least one of the $a_{i}$s is zero). First note
that the covariance pairs remain constant over the path: $K_{t}=A_{t}^{T}A_{t}=\sum_{i=1}^{w}((1-t)+ta_{i})\hat{A}_{i\cdot}\hat{A}_{i\cdot}^{T}=(1-t)\hat{K}+t\hat{K}=\hat{K}$
and similarly $K_{t}^{\sigma}=\hat{K}^{\sigma}$, which implies that
the cost $\left\Vert A_{t}\sigma(A_{t})^{+}\right\Vert _{F}^{2}=\mathrm{Tr}\left[K_{t}K_{t}^{\sigma+}\right]$
is constant too. Second, since a representation $A$ is non-negative
iff the covariances satisfy $K=K^{\sigma}$, the representation path
$A_{t}$ cannot be non-negative either since it has the same kernel
pairs $(\hat{K},\hat{K}^{\sigma})$ with $\hat{K}\neq\hat{K}^{\sigma}$.

Now (the converse of) Proposition \ref{prop:stable_minima_are_positive}
tells us that if $A_{t=1}$ is not non-negative and has a zero line,
then it is not a local minimum, which implies that it is a saddle.
\end{proof}

\subsection{Bottleneck}
\begin{thm}[ Theorem \ref{thm:stable_energy_decomposition} in the main] For any geodesic, we have
\[
\mathcal{H}=\frac{1}{2\tilde{L}}\left\Vert \partial_{p}A_{p}+\gamma\tilde{L}B_{p}\right\Vert _{(K_{p}+\gamma I)}^{2}-\frac{\tilde{L}}{2}\left\Vert A_{p}\right\Vert _{(K_{p}+\gamma I)}^{2}-\gamma\frac{\tilde{L}}{2}\left\Vert B_{p}\right\Vert ^{2}.
\]
Therefore if $\left\Vert B_{p}\right\Vert \leq c$, we can bound the
distance between the Hamiltonians
\[
\left|\frac{2}{\tilde{L}}\mathcal{H}-\frac{2}{\tilde{L}}\mathcal{H}_{\gamma}\right|\leq\frac{2}{\tilde{L}}\left\Vert \partial_{p}A_{p}\right\Vert _{(K_{p}+\gamma I)}\sqrt{\gamma}c+\gamma c^{2}
\]
and guarantee that the rate of change $\partial_{p}A_{p}$ scales
with $\tilde{L}$ times the extra-dimensionality
\[
\left|\left\Vert \partial_{p}A_{p}\right\Vert _{(K_{p}+\gamma I)}-\tilde{L}\sqrt{\left\Vert A_{p}\right\Vert _{(K_{p}+\gamma I)}^{2}+\frac{2}{\tilde{L}}\mathcal{H}}\right|\leq2\tilde{L}\sqrt{\gamma}c.
\]

Finally we can guarantee that the rescaled Hamiltonian $-\frac{2}{\tilde{L}}\mathcal{H}$
approaches the minimal $\gamma$-COI from below as $\tilde{L}\to\infty$
(up to $\gamma c^{2}$ terms):
\[
-\left(\frac{1}{\tilde{L}}\ell_{\gamma,\tilde{L}}+\sqrt{\gamma}c\right)^{2}\leq-\frac{2}{\tilde{L}}\mathcal{H}-\min_{p}\left\Vert A_{p}^{\tilde{L}}\right\Vert _{(K_{p}+\gamma I)}^{2}\leq\gamma c^{2},
\]
for the path length $\ell_{\gamma,\tilde{L}}=\int_{0}^{1}\left\Vert \partial_{p}A_{p}^{\tilde{L}}\right\Vert _{(K_{p}+\gamma I)}dp$.
\end{thm}

\begin{proof}
(1) Since $B_{p}=(\frac{1}{\tilde{L}}\partial_{p}A_{p}+A_{p})K_{p}^{+}$,
we have 
\begin{align*}
\left\Vert \frac{1}{\tilde{L}}\partial_{p}A_{p}+\gamma B_{p}\right\Vert _{(K_{p}+\gamma I)}^{2} & =\left\Vert B_{p}(K_{p}+\gamma)-A_{p}\right\Vert _{(K_{p}+\gamma I)}^{2}\\
 & =\left\Vert B_{p}\sigma(A_{p})^{T}\right\Vert ^{2}+\gamma\left\Vert B_{p}\right\Vert ^{2}-2\mathrm{Tr}\left[B_{p}A_{p}^{T}\right]+\left\Vert A_{p}\right\Vert _{(K_{p}+\gamma I)}^{2}\\
 & =\frac{2}{\tilde{L}}\mathcal{H}+\gamma\left\Vert B_{p}\right\Vert ^{2}+\left\Vert A_{p}\right\Vert _{(K_{p}+\gamma I)}^{2}
\end{align*}
and thus we have 
\[
-\frac{2}{\tilde{L}}\mathcal{H}=\left\Vert A_{p}\right\Vert _{(K_{p}+\gamma I)}^{2}-\left\Vert \frac{1}{\tilde{L}}\partial_{p}A_{p}+\gamma B_{p}\right\Vert _{(K_{p}+\gamma I)}^{2}+\gamma\left\Vert B_{p}\right\Vert ^{2}.
\]

(2) We can further simplify the previous equality to 
\begin{align*}
\frac{2}{\tilde{L}}\mathcal{H} & =\left\Vert \frac{1}{\tilde{L}}\partial_{p}A_{p}+\gamma B_{p}\right\Vert _{(K_{p}+\gamma I)}^{2}-\left\Vert A_{p}\right\Vert _{(K_{p}+\gamma I)}^{2}-\gamma\left\Vert B_{p}\right\Vert ^{2}\\
 & =\frac{2}{\tilde{L}}\mathcal{H}_{\gamma,p}+\frac{2\gamma}{\tilde{L}}\left\langle \partial_{p}A_{p},B_{p}\right\rangle _{(K_{p}+\gamma I)}+\gamma^{2}\left\Vert B_{p}\right\Vert _{(K_{p}+\gamma I)}^{2}-\gamma\left\Vert B_{p}\right\Vert ^{2}
\end{align*}
Leading to the upper bound
\begin{align*}
\frac{2}{\tilde{L}}\mathcal{H}-\frac{2}{\tilde{L}}\mathcal{H}_{\gamma,p} & \leq\frac{2\gamma}{\tilde{L}}\left\Vert \partial_{p}A_{p}\right\Vert _{(K_{p}+\gamma I)}\left\Vert B_{p}\right\Vert _{(K_{p}+\gamma I)}+\gamma^{2}\left\Vert B_{p}\right\Vert _{(K_{p}+\gamma I)}^{2}\\
 & \leq\frac{2}{\tilde{L}}\left\Vert \partial_{p}A_{p}\right\Vert _{(K_{p}+\gamma I)}\sqrt{\gamma}\left\Vert B_{p}\right\Vert +\gamma\left\Vert B_{p}\right\Vert ^{2}\\
 & \leq\frac{2}{\tilde{L}}\left\Vert \partial_{p}A_{p}\right\Vert _{(K_{p}+\gamma I)}\sqrt{\gamma}c+\gamma c^{2}
\end{align*}
and lower bound 
\begin{align*}
\frac{2}{\tilde{L}}\mathcal{H}-\frac{2}{\tilde{L}}\mathcal{H}_{\gamma,p} & \geq-\frac{2\gamma}{\tilde{L}}\left\Vert \partial_{p}A_{p}\right\Vert _{(K_{p}+\gamma I)}\left\Vert B_{p}\right\Vert _{(K_{p}+\gamma I)}-\gamma\left\Vert B_{p}\right\Vert ^{2}\\
 & \geq-\frac{2}{\tilde{L}}\left\Vert \partial_{p}A_{p}\right\Vert _{(K_{p}+\gamma I)}\sqrt{\gamma}c-\gamma c^{2}.
\end{align*}

(3) We have the lower bound 
\begin{align}
\frac{1}{\tilde{L}}\left\Vert \partial_{p}A_{p}\right\Vert _{(K_{p}+\gamma I)} & \geq\left\Vert \frac{1}{\tilde{L}}\partial_{p}A_{p}+\gamma B_{p}\right\Vert _{(K_{p}+\gamma I)}-\left\Vert \gamma B_{p}\right\Vert _{(K_{p}+\gamma I)}\nonumber \\
 & \geq\sqrt{\left\Vert A_{p}\right\Vert _{(K_{p}+\gamma I)}^{2}+\frac{2}{\tilde{L}}\mathcal{H}+\gamma\left\Vert B_{p}\right\Vert ^{2}}-\sqrt{\gamma}c\nonumber \\
 & \geq\sqrt{\left\Vert A_{p}\right\Vert _{(K_{p}+\gamma I)}^{2}+\frac{2}{\tilde{L}}\mathcal{H}}-\sqrt{\gamma}c,\label{eq:lower_bound_kinetic_energy}
\end{align}
and upper bound
\begin{align*}
\frac{1}{\tilde{L}}\left\Vert \partial_{p}A_{p}\right\Vert _{(K_{p}+\gamma I)} & \leq\left\Vert \frac{1}{\tilde{L}}\partial_{p}A_{p}+\gamma B_{p}\right\Vert _{(K_{p}+\gamma I)}+\left\Vert \gamma B_{p}\right\Vert _{(K_{p}+\gamma I)}\\
 & \leq\sqrt{\left\Vert A_{p}\right\Vert _{(K_{p}+\gamma I)}^{2}+\frac{2}{\tilde{L}}\mathcal{H}+\gamma\left\Vert B_{p}\right\Vert ^{2}}+\sqrt{\gamma}c\\
 & \leq\sqrt{\left\Vert A_{p}\right\Vert _{(K_{p}+\gamma I)}^{2}+\frac{2}{\tilde{L}}\mathcal{H}}+\sqrt{\gamma}\left\Vert B_{p}\right\Vert +\sqrt{\gamma}c\\
 & \leq\sqrt{\left\Vert A_{p}\right\Vert _{(K_{p}+\gamma I)}^{2}+\frac{2}{\tilde{L}}\mathcal{H}}+2\sqrt{\gamma}c.
\end{align*}

(4) The upper bound $-\frac{2}{\tilde{L}}\mathcal{H}-\min_{p}\left\Vert A_{p}^{\tilde{L}}\right\Vert _{(K_{p}+\gamma I)}^{2}\leq\gamma c^{2}$
follows from the fact that $\left\Vert B_{p}\right\Vert ^{2}\leq c^{2}$.
For the lower bound, we integrate the lower bound from equation \ref{eq:lower_bound_kinetic_energy}:
\begin{align*}
\frac{1}{\tilde{L}}\ell_{\gamma,\tilde{L}} & =\frac{1}{\tilde{L}}\int_{0}^{1}\left\Vert \partial_{p}A_{p}\right\Vert _{(K_{p}+\gamma I)}dp\\
 & \geq\int_{0}^{1}\sqrt{\left\Vert A_{p}\right\Vert _{(K_{p}+\gamma I)}^{2}+\frac{2}{\tilde{L}}\mathcal{H}}dp-\sqrt{\gamma}c\\
 & \geq\sqrt{\min_{p}\left\Vert A_{p}\right\Vert _{(K_{p}+\gamma I)}^{2}+\frac{2}{\tilde{L}}\mathcal{H}}-\sqrt{\gamma}c.
\end{align*}
\end{proof}

\begin{prop}[Proposition \ref{prop:stable_path_positive} in the main.]
Let $A_{p}^{\tilde{L}}$ be a uniformly bounded sequence of local
minima for increasing $\tilde{L}$, at any $p_{0}\in(0,1)$ such that
$\left\Vert \partial_{p}A_{p}\right\Vert $ is uniformly bounded in
a neighborhood of $p_{0}$ for all $\tilde{L}$, then $A_{p_{0}}^{\infty}=\lim_{\tilde{L}}A_{p_{0}}^{\tilde{L}}$
is non-negative. 
\end{prop}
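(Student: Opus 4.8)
The plan is to show that at a slow point $p_{0}$ the limit $A_{p_{0}}^{\infty}$ is a \emph{stable} local minimum of the cost of identity $A\mapsto\left\Vert A\sigma(A)^{+}\right\Vert_{F}^{2}$, and then conclude non-negativity from Proposition \ref{prop:stable_minima_are_positive}. As a preliminary: the uniform bound $\left\Vert\partial_{p}A_{p}^{\tilde{L}}\right\Vert\le C$ on a fixed neighborhood $[p_{0}-\tau_{0},p_{0}+\tau_{0}]$ together with $A_{p_{0}}^{\tilde{L}}\to A_{p_{0}}^{\infty}$ gives $\sup_{|p-p_{0}|\le\tau}\left\Vert A_{p}^{\tilde{L}}-A_{p_{0}}^{\infty}\right\Vert\le C\tau+o_{\tilde{L}}(1)$ for every $\tau\le\tau_{0}$, so on a short window the whole path stays within $\approx C\tau$ of $A_{p_{0}}^{\infty}$. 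Following the main text, we read ``stable'' in the strong sense that the zero-padded paths $(A_{p}^{\tilde{L}};0)$ are themselves local minima of the width-$(w+1)$ regularized loss (which is automatic when the minima are strict in a wide enough net).

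The heart of the argument is a competitor/excursion construction contradicting local minimality of the padded geodesic. Suppose $A_{p_{0}}^{\infty}$ is not non-negative, so there is a column $i$ with $\delta_{0}:=\left\Vert(A_{p_{0}}^{\infty})_{\cdot i}\right\Vert^{2}-\left\Vert\sigma((A_{p_{0}}^{\infty})_{\cdot i})\right\Vert^{2}>0$. The expansion in the proof of Proposition \ref{prop:stable_minima_are_positive} shows that appending a small new neuron $\epsilon z$ in the direction $z=\sigma(A)^{T}\sigma(A_{\cdot i})$ lowers the COI like $-\epsilon^{2}\delta_{0}(A)+O(\epsilon^{4})$; choosing $\tau$ small enough that $\delta_{0}(A_{p}^{\tilde{L}})\ge\delta_{0}/2$ on the window and $\epsilon_{0}$ small, we get, uniformly on the window and for large $\tilde{L}$, that $\mathrm{COI}((A_{p}^{\tilde{L}};\epsilon z))\le\mathrm{COI}(A_{p}^{\tilde{L}})$ for all $\epsilon\in[0,\epsilon_{0}]$, with the gap at least $\tfrac14\epsilon_{0}^{2}\delta_{0}$ at $\epsilon=\epsilon_{0}$. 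Now perturb the padded path only on $[p_{0}-\tau,p_{0}+\tau]$ by adding $\phi(p)\epsilon_{0}z$ to the new-neuron row, where $\phi$ is a bump equal to $1$ on $[p_{0}-\tau/2,p_{0}+\tau/2]$ and $0$ outside; the endpoints, and the path outside the window, are unchanged. The change in the regularized loss splits into a potential part $\approx\tfrac{\lambda\tilde{L}}{2}\int_{p_{0}-\tau}^{p_{0}+\tau}\big(\mathrm{COI}(\tilde{A}_{p})-\mathrm{COI}(A_{p}^{\tilde{L}})\big)dp\le-\Theta(\tilde{L}\,\tau\,\epsilon_{0}^{2}\delta_{0})$ (the integrand is $\le0$ on the window and $\le-\tfrac14\epsilon_{0}^{2}\delta_{0}$ on its inner half), and a kinetic part of size $O\big(\tfrac{1}{\tilde{L}}\cdot\tfrac{\epsilon_{0}^{2}\left\Vert z\right\Vert^{2}}{\tau}\big)$ coming from $\left\Vert\partial_{p}\phi\right\Vert_{\infty}=O(1/\tau)$. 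Fixing $\tau,\epsilon_{0}$ and sending $\tilde{L}\to\infty$, the $\Theta(\tilde{L})$ potential gain dominates the $O(1/\tilde{L})$ kinetic cost, so the competitor has strictly smaller loss; being $C^{0}$-small it lies in the neighborhood of local minimality, a contradiction. Hence $A_{p_{0}}^{\infty}$ is non-negative.

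The main obstacle is making the potential/kinetic bookkeeping rigorous, because both the COI $\left\Vert\tilde{A}_{p}\sigma(\tilde{A}_{p})^{+}\right\Vert_{F}^{2}$ and the realized weight norm $\left\Vert\tilde{L}\tilde{A}_{p}+\partial_{p}\tilde{A}_{p}\right\Vert_{\tilde{K}_{p}^{+}}^{2}$ involve the unstable, discontinuous pseudo-inverse $\sigma(\tilde{A}_{p})^{+}$ (precisely the pathology flagged in the Remark): small perturbations of the path can change $\sigma(\tilde{A}_{p})^{+}$ and the COI a lot, and the cross term $\tilde{L}\langle\tilde{A}_{p},\partial_{p}\tilde{A}_{p}\rangle_{\tilde{K}_{p}^{+}}$ in the naive decomposition need not be lower order. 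The fix is to run the entire comparison with the regularized norms $\left\Vert\cdot\right\Vert_{(K_{p}+\gamma I)}$ and the estimates of Theorem \ref{thm:stable_energy_decomposition} for a suitable $\gamma=\gamma(\tilde{L})\to0$, and to check that the resulting $O(\sqrt{\gamma}c)$-type error terms cannot close the $\Theta(\tilde{L})$-versus-$O(1/\tilde{L})$ gap; one also needs $\sigma(\tilde{A}_{p})$ to stay nondegenerate enough along the excursion and to confirm that the competitor genuinely lies in the domain of the local-minimality hypothesis (smoothing the bump and using $\left\Vert\partial_{p}A_{p}^{\tilde{L}}\right\Vert\le C$).
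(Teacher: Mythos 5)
Your high-level plan is the same as the paper's: localize near $p_0$, append a new neuron in the direction $z=\sigma(A)^{T}\sigma(A_{\cdot i})$ on a short window, and trade a potential-energy gain of order $\tilde{L}$ against a kinetic cost of order $1/\tilde{L}$. The gap is in how you price the competitor. You compute its regularized loss via the COI/kinetic decomposition, which is precisely the ill-posed object the Remark at the end of Section~\ref{section_lagrange} warns about: the three terms can individually be infinite, with cancellations making only the sum finite, and the decomposition is not even valid unless $\mathrm{Im}\,\tilde{A}_p^T\subseteq\mathrm{Im}\,\sigma(\tilde{A}_p)^T$. You flag this and propose switching to $\left\Vert\cdot\right\Vert_{(K_p+\gamma I)}$ via Theorem~\ref{thm:stable_energy_decomposition}, but that does not close the gap: $\left\Vert M\right\Vert_{(K_p+\gamma I)}\leq\left\Vert M\right\Vert_{K_p}$, so the regularized quantity \emph{under-estimates} the competitor's true parameter norm, whereas to refute local minimality of the padded path you need an \emph{upper} bound on it. Moreover Theorem~\ref{thm:stable_energy_decomposition} is derived from the Hamiltonian/optimality equations that a geodesic satisfies; the competitor is not a geodesic, so the theorem cannot be invoked for it.

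The paper avoids the issue entirely by perturbing the \emph{weights} rather than the representations. It takes the new weight matrix
\[
\left(\begin{array}{cc}
W_{p}-\tilde{L}\epsilon^{2}t(p)A_{p,\cdot i}\sigma(A_{p,\cdot i})^{T} & \epsilon\tilde{L}t(p)A_{p,\cdot i}\\
\epsilon\tilde{L}t(p)\sigma(A_{p,\cdot i}) & 0
\end{array}\right)
\]
with a triangular bump $t$ supported in a radius-$r$ window around $p_0$, expands its Frobenius norm to order $\epsilon^{2}$, and substitutes the defining ODE $W_{p}\sigma(A_{p,\cdot i})=\partial_{p}A_{p,\cdot i}+\tilde{L}A_{p,\cdot i}$, so that no pseudo-inverse, COI, or kernel ever appears. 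The $\epsilon^{2}$ coefficient is $\tilde{L}^{2}t(p)^{2}\bigl(-\left\Vert A_{p,\cdot i}\right\Vert^{2}+\left\Vert\sigma(A_{p,\cdot i})\right\Vert^{2}+O(\tilde{L}^{-1})\bigr)$, which is strictly negative for $r$ small (so the gap stays $\geq c_{0}/2$) and $\tilde{L}$ large (to kill the $O(\tilde{L}^{-1})$ cross term, using your uniform bound on $\partial_{p}A_{p}$). What remains is a Gr\"onwall argument showing the resulting representation path stays $O(\epsilon)$-close to the zero-padded original, so the fit term changes negligibly. This direct weight-space computation is the step your sketch is missing, and the regularized-norm route you propose does not substitute for it.
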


\begin{proof}
Given a path $A_{p}$ with corresponding weight matrices $W_{p}$
corresponding to a width $w$, then $\left(\begin{array}{c}
A\\
0
\end{array}\right)$ is a path with weight matrix $\left(\begin{array}{cc}
W_{p} & 0\\
0 & 0
\end{array}\right)$. Our goal is to show that for sufficiently large depths, one can
under certain assumptions slightly change the weights to obtain a
new path with the same endpoints but a slightly lower loss, thus ensuring
that if certain assumptions are not satisfied then the path cannot
be locally optimal.

Let us assume that $\left\Vert \partial_{p}A_{p}\right\Vert \leq c_{1}$
in a neighborhood of a $p_{0}\in(0,1)$, and assume by contradiction
that there is an input index $i=1,\dots,N$ such that $A_{p_{0},\cdot i}$
has at least one negative entry, and therefore $\left\Vert A_{p_{0},\cdot i}\right\Vert ^{2}-\left\Vert \sigma(A_{p_{0},\cdot i})\right\Vert ^{2}=c_{0}>0$
for all $\tilde{L}$.

We now consider the new weights 
\[
\left(\begin{array}{cc}
W_{p}-\tilde{L}\epsilon^{2}t(p)A_{p,\cdot i}\sigma(A_{p,\cdot i})^{T} & \epsilon\tilde{L}t(p)A_{p,\cdot i}\\
\epsilon\tilde{L}t(p)\sigma(A_{p,\cdot i}) & 0
\end{array}\right)
\]
for $t(p)=\max\{0,1-\frac{\left|p-p_{0}\right|}{r}\}$ a triangular
function centered in $p_{0}$ and for an $\epsilon>0$.

For $\epsilon$ and $r$small enough, the parameter norm will decrease:
\begin{align*}
 & \int_{0}^{1}\left\Vert \begin{array}{cc}
W_{p}-\tilde{L}\epsilon^{2}t(p)A_{p,\cdot i}\sigma(A_{p,\cdot i})^{T} & \epsilon\tilde{L}t(p)A_{p,\cdot i}\\
\epsilon\tilde{L}t(p)\sigma(A_{p,\cdot i}) & 0
\end{array}\right\Vert ^{2}dp\\
 & =\int_{0}^{1}\left\Vert W_{p}\right\Vert ^{2}+\tilde{L}^{2}\epsilon^{2}t(p)^{2}\left(-\frac{2}{\tilde{L}}A_{p,\cdot i}^{T}W_{p}\sigma(A_{p,\cdot i})+\left\Vert A_{p,\cdot i}\right\Vert ^{2}+\left\Vert \sigma(A_{p,\cdot i})\right\Vert ^{2}\right)dp+O(\epsilon^{4}).
\end{align*}
Now since $W_{p}\sigma(A_{p,\cdot i})=\partial_{p}A_{p,\cdot i}+\tilde{L}A_{p,\cdot i}$,
this simplifies to
\[
\int_{0}^{1}\left\Vert W_{p}\right\Vert ^{2}+\tilde{L}^{2}\epsilon^{2}t(p)^{2}\left(-\left\Vert A_{p,\cdot i}\right\Vert ^{2}+\left\Vert \sigma(A_{p,\cdot i})\right\Vert ^{2}-\frac{1}{\tilde{L}}A_{p,\cdot i}^{T}\partial_{p}A_{p,\cdot i}\right)dp+O(\epsilon^{4}).
\]
By taking $r$ small enough, we can guarantee that $-\left\Vert A_{p,\cdot i}\right\Vert ^{2}+\left\Vert \sigma(A_{p,\cdot i})\right\Vert ^{2}<-\frac{c_{0}}{2}$
for all $p$ such that $t(p)>0$, and for $\tilde{L}$ large enough
we can guarantee that $\left|\frac{1}{\tilde{L}}A_{p,\cdot i}^{T}\partial_{p}A_{p,\cdot i}\right|$
is smaller then $\frac{c_{0}}{4}$, so that we can guarantee that
the parameter norm will be strictly smaller for $\epsilon$ small
enough.

We will now show that with these new weights the path becomes approximately
$\left(\begin{array}{c}
A_{p}\\
\epsilon a_{p}
\end{array}\right)$ where
\[
a_{p}=\tilde{L}\int_{0}^{p}t(q)K_{p,i\cdot}e^{\tilde{L}(q-p)}dq.
\]
Note that $a_{p}$ is positive for all $p$ since $K_{p}$ has only
positive entries. Also note that as $\tilde{L}\to\infty$, $a_{p}\to t(p)K_{p,i\cdot}$
and so that $a_{0}\to 0$ and $a_{1}\to 0$.

On one hand, we have the time derivative 
\[
\partial_{p}\left(\begin{array}{c}
A_{p}\\
\epsilon a_{p}
\end{array}\right)=\left(\begin{array}{c}
W_{p}\sigma(A_{p})-\tilde{L}A_{p}\\
\epsilon\tilde{L}\left(t(p)K_{p,i\cdot}-a_{p}\right)
\end{array}\right).
\]

On the other hand the actual derivative as determined by the new weights:
\begin{align*}
 & \left(\begin{array}{cc}
W_{p}-\tilde{L}\epsilon^{2}t(p)A_{p,\cdot i}\sigma(A_{p,\cdot i})^{T} & \epsilon\tilde{L}t(p)A_{p,\cdot i}\\
\epsilon\tilde{L}t(p)\sigma(A_{p,\cdot i}) & 0
\end{array}\right)\left(\begin{array}{c}
\sigma(A_{p})\\
\epsilon\sigma(a_{p})
\end{array}\right)-\tilde{L}\left(\begin{array}{c}
A_{p}\\
\epsilon a_{p}
\end{array}\right)\\
 & =\left(\begin{array}{c}
W_{p}\sigma(A_{p})-\tilde{L}A_{p}-\tilde{L}\epsilon^{2}t(p)^{2}A_{p,\cdot i}K_{p,i\cdot}+\tilde{L}\epsilon^{2}t(p)A_{p,\cdot i}a_{p}\\
\epsilon\tilde{L}t(p)K_{p,i\cdot}-\epsilon\tilde{L}a(p)
\end{array}\right).
\end{align*}
The only difference is the two terms
\[
-\tilde{L}\epsilon^{2}t(p)^{2}A_{p,\cdot i}K_{i\cdot}+\tilde{L}\epsilon^{2}t(p)A_{p,\cdot i}a_{p}=-\tilde{L}\epsilon^{2}t(p)A_{p,\cdot i}\left(t(p)K_{i\cdot}-a_{p}\right).
\]
One can guarantee with a Grönwall type of argument that the representation
path resulting from the new weights must be very close to the path
$\left(\begin{array}{c}
A_{p}\\
\epsilon a_{p}
\end{array}\right)$.
\end{proof}

\subsection{Balancedness}\label{sec:balancedness}

This paper will heavily focus on the Hamiltonian $\mathcal{H}_{p}$
that is constant throughout the layers $p\in[0,1]$, and how it can
be interpreted. Note that the Hamiltonian we introduce is distinct
from an already known invariant, which arises as the result of so-called
balancedness, which we introduce now.

Though this balancedness also appears in ResNets, it is easiest to
understand in fullyconnected networks. First observe that for any
neuron $i\in1,\dots,w$ at a layer $\ell$ one can multiply the incoming
weights $(W_{\ell,i\cdot},b_{\ell,i})$ by a scalar $\alpha$ and
divide the outcoming weights $W_{\ell+1,\cdot i}$ by the same scalar
$\alpha$ without changing the subsequent layers. One can easily see
that the scaling that minimize the contribution to the parameter norm
is such that the norm of incoming weights equals the norm of the outcoming
weights $\left\Vert W_{\ell,i\cdot}\right\Vert ^{2}+\left\Vert b_{\ell,i}\right\Vert ^{2}=\left\Vert W_{\ell+1,\cdot i}\right\Vert ^{2}$.
Summing over the $i$s we obtain $\left\Vert W_{\ell}\right\Vert _{F}^{2}+\left\Vert b_{\ell}\right\Vert ^{2}=\left\Vert W_{\ell+1}\right\Vert _{F}^{2}$
and thus $\left\Vert W_{\ell}\right\Vert _{F}^{2}=\left\Vert W_{1}\right\Vert _{F}^{2}+\sum_{k=1}^{\ell-1}\left\Vert b_{k}\right\Vert _{F}^{2}$,
which means that the norm of the weights is increasing throughout
the layers, and in the absence of bias, it is even constant.

Leaky ResNet exhibit the same symmetry:
\begin{prop}
At any critical $W_{p}$, we have $\left\Vert W_{p}\right\Vert ^{2}=\left\Vert W_{0}\right\Vert ^{2}+\tilde{L}\int_{0}^{p}\left\Vert W_{p,\cdot w+1}\right\Vert ^{2}dq$.
\end{prop}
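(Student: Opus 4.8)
The plan is to read this balance law off the first-order optimality (Hamiltonian) equations rather than from a weight-rescaling symmetry: unlike in a fully-connected net, rescaling a single neuron of a ResNet also rescales the skip term, so neuron rescaling is not a symmetry of the loss. At a critical point we have $W_{p}=\tilde{L}B_{p}\sigma(A_{p})^{T}$, the pair $(A_{p},B_{p})$ obeys the coupled equations $\partial_{p}A_{p}=\tilde{L}(B_{p}\sigma(A_{p})^{T}\sigma(A_{p})-A_{p})$ and $-\partial_{p}B_{p}=\tilde{L}(\dot{\sigma}(A_{p})\odot[\sigma(A_{p})B_{p}^{T}B_{p}]-B_{p})$, and $\mathcal{H}(A_{p},B_{p})$ is conserved in $p$. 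Writing $K_{p}=\sigma(A_{p})^{T}\sigma(A_{p})$, the first step is the elementary rewriting
\[
\left\Vert W_{p}\right\Vert _{F}^{2}=\tilde{L}^{2}\,\mathrm{Tr}[B_{p}K_{p}B_{p}^{T}]=2\tilde{L}\mathcal{H}+2\tilde{L}^{2}\,\mathrm{Tr}[B_{p}A_{p}^{T}],
\]
which, since $\mathcal{H}$ is $p$-independent, reduces the whole statement to computing $\tfrac{d}{dp}\mathrm{Tr}[B_{p}A_{p}^{T}]$.

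The second step is to expand $\tfrac{d}{dp}\mathrm{Tr}[B_{p}A_{p}^{T}]=\langle\partial_{p}B_{p},A_{p}\rangle+\langle B_{p},\partial_{p}A_{p}\rangle$ by plugging in the two evolution equations. Each inner product breaks into a term proportional to $\tilde{L}\,\mathrm{Tr}[A_{p}B_{p}^{T}]$ and a term proportional to $\tilde{L}\,\mathrm{Tr}[B_{p}K_{p}B_{p}^{T}]$; the first pair cancels by sign, and the second pair would cancel too except for one point. Contracting the backward equation with $A_{p}$ uses $\dot{\sigma}(A_{p})\odot A_{p}=[A_{p}]_{+}$, which reconstructs $[A_{p}]_{+}^{T}[A_{p}]_{+}$ but not the rank-one piece $\mathbf{1}\mathbf{1}^{T}$ of $K_{p}=[A_{p}]_{+}^{T}[A_{p}]_{+}+\mathbf{1}\mathbf{1}^{T}$ coming from the constant coordinate appended by $\sigma$. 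What survives the cancellation is therefore a multiple of $\tilde{L}\,\mathrm{Tr}[B_{p}^{T}B_{p}\mathbf{1}\mathbf{1}^{T}]=\tilde{L}\left\Vert B_{p}\mathbf{1}\right\Vert ^{2}$. Since that appended coordinate equals the constant $1$, the last column of $W_{p}=\tilde{L}B_{p}\sigma(A_{p})^{T}$ is exactly $W_{p,\cdot w+1}=\tilde{L}B_{p}\mathbf{1}$, so $\left\Vert B_{p}\mathbf{1}\right\Vert ^{2}=\tilde{L}^{-2}\left\Vert W_{p,\cdot w+1}\right\Vert ^{2}$; integrating $\tfrac{d}{dp}\left\Vert W_{p}\right\Vert _{F}^{2}$ from $0$ to $p$ then yields the stated identity. (My bookkeeping actually produces the prefactor $2\tilde{L}$ rather than $\tilde{L}$, but the structure is in any case as claimed, and it is consistent with the fully-connected picture: with no appended coordinate $K_{p}=[A_{p}]_{+}^{T}[A_{p}]_{+}$ has no $\mathbf{1}\mathbf{1}^{T}$ term, the cancellation is exact, and $\left\Vert W_{p}\right\Vert$ is constant in $p$.)

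An equivalent route bypasses $\mathcal{H}$: differentiate $\left\Vert W_{p}\right\Vert _{F}^{2}=\tilde{L}^{2}\mathrm{Tr}[B_{p}K_{p}B_{p}^{T}]$ directly, noting that $\partial_{p}K_{p}$ only sees $\partial_{p}\sigma(A_{p})$, whose appended row is identically zero; feeding in the backward equation for $\partial_{p}B_{p}$ and the forward equation for $\partial_{p}A_{p}$ then produces the same $\mathbf{1}\mathbf{1}^{T}$-defect and hence the same residual $\propto\tilde{L}\left\Vert W_{p,\cdot w+1}\right\Vert ^{2}$.

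I expect the only real obstacle to be careful handling of the appended coordinate: it is precisely the clash between ``$\sigma$ is positively $1$-homogeneous on the ReLU coordinates'' and ``$\sigma$ is constant on the appended coordinate'' that makes this conservation law inhomogeneous, so one must avoid treating $\sigma(A_{p})$ as $1$-homogeneous in $A_{p}$. A secondary, routine point (shared with the rest of the paper) is that ReLU is only a.e.\ differentiable, so the expressions for $\partial_{p}\sigma(A_{p})$, $\partial_{p}K_{p}$ and $\partial_{p}B_{p}$ should be read as holding for a.e.\ $p$, which is enough to integrate.
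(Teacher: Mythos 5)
Your approach is genuinely different from the paper's. The paper derives this balance law from a reparametrization symmetry of the depth variable: given a critical $W_p$, it considers the family $\tilde V_q=r'(q)V_{r(q)}$, $\tilde b_q=r'(q)e^{\tilde L(r(q)-q)}b_{r(q)}$ (which reproduces the same endpoint $A_1$), and imposes stationarity of the parameter norm at $r=\mathrm{id}$; the resulting Euler--Lagrange equation integrates to the stated identity. You instead read the law off the Hamiltonian flow: conservation of $\mathcal{H}$ reduces $\partial_p\|W_p\|_F^2=\tilde L^2\partial_p\mathrm{Tr}[B_pK_pB_p^T]$ to $2\tilde L^2\partial_p\mathrm{Tr}[B_pA_p^T]$, and plugging in the coupled ODEs makes the ReLU contribution cancel exactly via $\dot\sigma(A_p)\odot A_p=[A_p]_+$, so that only the appended constant coordinate survives. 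I checked the bookkeeping: writing $K_p=[A_p]_+^T[A_p]_+ + \mathbf{1}\mathbf{1}^T$ one gets $\partial_p\mathrm{Tr}[B_pA_p^T]=\tilde L\,\mathrm{Tr}[B_p\mathbf{1}\mathbf{1}^TB_p^T]=\tilde L\|B_p\mathbf{1}\|^2$, hence $\partial_p\|W_p\|_F^2 = 2\tilde L^3\|B_p\mathbf{1}\|^2 = 2\tilde L\|W_{p,\cdot w+1}\|^2$. Also, your opening observation that neuron rescaling is no longer a symmetry is correct and is a good reason to look for the right argument, but the paper's actual route (reparametrization of $p$, not of neurons) sidesteps that obstruction in a different way than you do.

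On the factor of $2$: do not second-guess yourself; your $2\tilde L$ is correct and the paper's $\tilde L$ is an error. If one carries out the paper's own variation carefully, perturbing $r(q)=q+\epsilon\,dr(q)$ produces three first-order terms: $2dr'(q)(\|W_q\|^2+\|b_q\|^2)$ from the $r'(q)^2$ prefactor, $2\tilde L\,dr(q)\|b_q\|^2$ from the exponential, and also $dr(q)\,\partial_q(\|W_q\|^2+\|b_q\|^2)$ from the shifted arguments $W_{r(q)},b_{r(q)}$. The paper's displayed stationarity condition omits this last term. Restoring it and integrating by parts on the $dr'$ term gives
\[
\int_0^1 dr(q)\Bigl[-2\,\partial_q\bigl(\|W_q\|^2+\|b_q\|^2\bigr)+\partial_q\bigl(\|W_q\|^2+\|b_q\|^2\bigr)+2\tilde L\|b_q\|^2\Bigr]\,dq=0,
\]
hence $\partial_p(\|W_p\|^2+\|b_p\|^2)=2\tilde L\|b_p\|^2$, in agreement with your Hamiltonian computation. (A trivial consistency check: with no bias term both derivations give $\|W_p\|_F^2$ constant in $p$, as expected.) So the proposition should read $\|W_p\|^2=\|W_0\|^2+2\tilde L\int_0^p\|W_{q,\cdot w+1}\|^2\,dq$; you should state your constant with confidence, and I would also tighten the last step (the cancellation yielding only the $\mathbf{1}\mathbf{1}^T$ contribution) rather than describing it prospectively, since it is a short exact calculation.
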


\begin{proof}
This proofs handles the bias $W_{p,\cdot(w+1)}$ differently
to the rest of the weights $W_{p,\cdot(1:w)}$, to simplify notations,
we write $V_{p}=W_{p,\cdot(1:w)}$ and $b_{p}=W_{p,\cdot(w+1)}$ for
the bias.

First let us show that choosing the weight matrices $\tilde{V}_{q}=r'(q)V_{r(q)}$
and bias $\tilde{b}_{q}=r'(q)e^{\tilde{L}(r(q)-q)}b_{r(q)}$ leads
to the path $\tilde{A}_{q}=e^{\tilde{L}\left(r(q)-q\right)}A_{r(q)}$.
Indeed the path $\tilde{A}_{q}=e^{\tilde{L}\left(r(q)-q\right)}A_{r(q)}$
has the right value when $p=0$ and it then satisfies the right differential
equation:
\begin{align*}
\partial_{q}\tilde{A}_{q} & =\tilde{L}(r'(q)-1)\tilde{A}_{q}+e^{\tilde{L}(r(q)-q)}r'(q)\partial_{p}A_{r(q)}\\
 & =\tilde{L}(r'(q)-1)\tilde{A}_{q}+e^{\tilde{L}(r(q)-q)}r'(q)\left(-\tilde{L}A_{r(q)}+V_{r(q)}\sigma(A_{r(q)})+b_{r(q)}\right)\\
 & =-\tilde{L}\tilde{A}_{q}+r'(q)A_{r(q)}\sigma\left(\tilde{Z}_{q}\right)+e^{\tilde{L}(r(q)-q)}r'(q)b_{r(q)}\\
 & =\tilde{V}_{q}\sigma\left(\tilde{A}_{q}\right)+\tilde{b}_{q}-\tilde{L}\tilde{A}_{q}
\end{align*}

The optimal reparametrization $r(q)$ is therefore the one that minimizes
\begin{align*}
\int_{0}^{1}\left\Vert \tilde{W}_{q}\right\Vert ^{2}+\left\Vert \tilde{b}_{q}\right\Vert ^{2}dq & =\int_{0}^{1}r'(q)^{2}\left(\left\Vert W_{r(q)}\right\Vert ^{2}+e^{2\tilde{L}(r(q)-q)}\left\Vert b_{r(q)}\right\Vert ^{2}\right)dq
\end{align*}

For the identity reparametrization $r(q)=q$ to be optimal, we need
\[
\int_{0}^{1}2dr'(p)\left(\left\Vert W_{p}\right\Vert ^{2}+\left\Vert b_{p}\right\Vert ^{2}\right)+2\tilde{L}dr(p)\left\Vert b_{p}\right\Vert ^{2}dp=0
\]
for all $dr(q)$ with $dr(0)=dr(1)=0$. Since
\[
\int_{0}^{1}dr'(p)\left(\left\Vert W_{p}\right\Vert ^{2}+\left\Vert b_{p}\right\Vert ^{2}\right)dp=-\int_{0}^{1}dr(p)\partial_{p}\left(\left\Vert W_{p}\right\Vert ^{2}+\left\Vert b_{p}\right\Vert ^{2}\right)dq,
\]
we need
\[
\int_{0}^{1}dr(p)\left[-\partial_{p}\left(\left\Vert W_{p}\right\Vert ^{2}+\left\Vert b_{p}\right\Vert ^{2}\right)+\tilde{L}\left\Vert b_{p}\right\Vert ^{2}\right]dp=0
\]
 and thus for all $p$
\[
\partial_{p}\left(\left\Vert W_{p}\right\Vert ^{2}+\left\Vert b_{p}\right\Vert ^{2}\right)=\tilde{L}\left\Vert b_{p}\right\Vert ^{2}.
\]
Integrating, we obtain as needed
\[
\left\Vert W_{p}\right\Vert ^{2}+\left\Vert b_{p}\right\Vert ^{2}=\left\Vert W_{0}\right\Vert ^{2}+\left\Vert b_{0}\right\Vert ^{2}+\tilde{L}\int_{0}^{p}\left\Vert b_{q}\right\Vert ^{2}dq.
\]
\end{proof}

\section{Experimental Setup}

Our experiments make use of synthetic data to train leaky ResNets so that the Bottleneck rank $k^*$ is known for our experiments. The synthetic data is generated by teacher networks for a given true rank $k^*$. To construct a bottleneck, the teacher network is a composition of networks for which the the inner-dimension is $k^*$. For data, we sampled a thousand data points for training, and another thousand for testing which are collectively augmented by demeaning and normalization.

To train the leaky ResNets, it is important for them to be wide, usually wider than the input or output dimension, we opted for a width of 200. However, the width of the representation must be constant to implement leaky residual connections, so we introduce a single linear mapping at the start, and another at the end, of the forward pass to project the representations into a higher dimension for the paths. These linear mappings can be either learned or fixed.

To achieve a tight convergence in training, we train primarily using Adam using Mean Squared Error as a loss function, and our custom weight decay function. After training on Adam (we found 20000 epochs to work well), we then train briefly (usually 10000 epochs) using SGD with a smaller learning rate to tighten the convergence. 

The bottleneck structure of a trained network, as seen in Figure \ref{fig:bounded_props}, can be observed in the spectra of the weight matrices $W_p$ at each layer. As long as the training is not over-regularized ($\lambda$ too large) then the spectra reveals a clear separation between $k^*$ number of large values as the rest decay. In our experiments, $\lambda = 0.002$ yielded good results. To facilitate the formation of the bottleneck structure, $L$ should be large, for our experiments we used $L=50$ and then a range from 4 to 22. Figure \ref{fig:test_error} shows how larger $L$, which have better separation between large and small singular values, lead to improved test performance.

As first noted in section \ref{section_lagrange}, solving for the Cost Of Identity, the kinetic energy, and the Hamiltonian $\mathcal{H}$ is difficult due to the instability of the pseudo-inverse. Although the relaxation $(K_p+\gamma I)$ improves the stability, we also utilize the solve function to avoid computing a pseudo-inverse altogether. The stability of these computations rely on the boundedness of some additional properties: the path length $\int ||\partial_p A_p|| \; dp$, as well as the magnitudes of $B_p$, and $B_p\sigma(A_p)^T$ from the Hamiltonian reformulation. Figure \ref{fig:bounded_props} shows how their respective magnitudes remains relatively constant as the effective depth $\tilde{L}$ grows.

For compute resources, these small networks are not particularly resource intensive. Even on a CPU, it only takes a couple minutes to fully train a leaky ResNet.

\begin{figure}
    \centering
    \includegraphics[width=\textwidth]{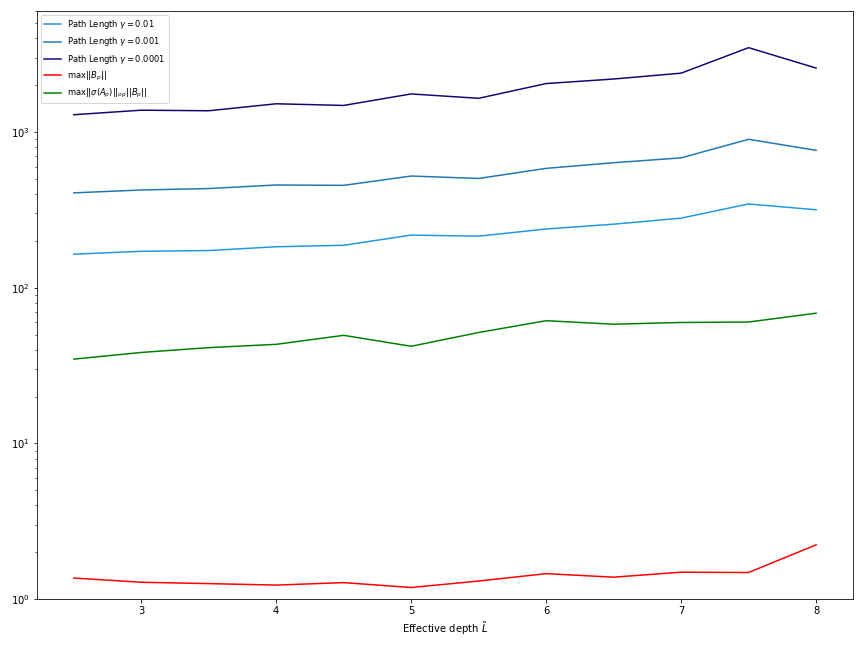}
    \caption{Various properties of the Hamiltonian dynamics of Leaky ResNets which remain bounded}
    \label{fig:bounded_props}
\end{figure}

\end{document}